\journalname{jmiv}
\newtheorem{assumption}{\textbf{H}\hspace{-3pt}}
\def\vx{x}
\def\Ly{\Lt_y}
\def\MAP{\mathrm{MAP}}
\def\dim{d}
\def\dimY{m} %
\newcommand{\cball}[2]{\overline{\operatorname{B}}(#1,#2)}
\newcommand{\prox}{\operatorname{prox}}
\newcommand{\Lt}{\mathtt{L}}
\newcommand{\bvareps}{\vareps_0}
\def\x{{x}}
\newcommandx{\norm}[2][1=]{\ifthenelse{\equal{#1}{}}{\left\Vert #2 \right\Vert}{\left\Vert #2 \right\Vert^{#1}}}
\newcommandx{\normLigne}[2][1=]{\ifthenelse{\equal{#1}{}}{\Vert #2 \Vert}{\Vert #2\Vert^{#1}}}
\def\msa{\mathsf{A}}
\def\mst{\mathsf{T}}
\def\msk{\mathsf{K}}
\def\mss{\mathsf{S}}
\def\msb{\mathsf{B}} 
\def\msc{\mathsf{C}}
\def\mse{\mathsf{E}}
\def\msu{\mathsf{U}}
\def\msv{\mathsf{V}}
\newcommand{\mcb}[1]{\mathcal{B}(#1)}
\def\mcf{\mathcal{F}}
\def\rset{\mathbb{R}}
\def\cset{\mathbb{C}}
\def\nset{\mathbb{N}}
\def\nsets{\mathbb{N}^*}
\def\rmi{\mathrm{i}}
\def\rmd{\mathrm{d}}
\def\rmc{\mathrm{C}}
\def\rmQ{\mathrm{Q}}
\def\rmP{\mathrm{P}}
\def\rmA{\mathrm{A}}
\newcommand{\R}{\mathbb R}
\newcommand{\argmax}{\operatorname*{arg\,max}}
\newcommand{\argmin}{\operatorname*{arg\,min}}
\newcommand{\1}{\mathbbm{1}}
\newcommand{\LeftEqNo}{\let\veqno\@@leqno}
\newcommand{\PE}{\mathbb{E}}
\newcommand{\abs}[1]{\left\vert #1 \right\vert}
\newcommand{\absLigne}[1]{\vert #1 \vert}
\newcommand{\defEns}[1]{\left\lbrace #1 \right\rbrace }
\newcommand{\expeLigne}[1]{\PE [ #1 ]}
\def\eqsp{\;}
\newcommand{\ocint}[1]{\left(#1\right]}
\newcommand{\ooint}[1]{\left(#1\right)}
\newcommand{\ocintLigne}[1]{(#1]}
\newcommand{\ball}[2]{\operatorname{B}(#1,#2)}
\newcommand{\opnorm}[1]{{\left\vert\kern-0.25ex\left\vert\kern-0.25ex\left\vert #1 
    \right\vert\kern-0.25ex\right\vert\kern-0.25ex\right\vert}}
\def\Id{\operatorname{Id}}
\newcommand{\CPP}[3][]
{\ifthenelse{\equal{#1}{}}{{\mathbb P}\left(\left. #2 \, \right| #3 \right)}{{\mathbb P}_{#1}\left(\left. #2 \, \right | #3 \right)}}
\def\Id{\operatorname{Id}}
\newcommand{\ensemble}[2]{\left\{#1\,:\eqsp #2\right\}}
\newcommand{\ensembleLigne}[2]{\{#1\,:\eqsp #2\}}
\newcommand\coupling[2]{\Gamma(\mu,\nu)}
\newcommand{\complementary}{\mathrm{c}}
\renewcommand{\geq}{\geqslant}
\renewcommand{\leq}{\leqslant}
\def\vareps{\varepsilon}
\newcommand \ie {{\em i.e. }}
\def\pnpsgd{PnP-SGD}
\def\Mtt{\mathtt{M}}
\def\Ltt{\mathtt{L}}
\def\1{\bm{1}}
\def\Ltt{\mathtt{L}}
\def\vx{x}
\def\eqsp{\;}
\def\Id{\operatorname{Id}}
\def\rmA{{\mathbf{A}}}
\def\rmP{{\mathbf{P}}}
\def\rmQ{{\mathbf{Q}}}
\def\vu{{\bm{u}}}
\def\vv{{\bm{v}}}
\def\vx{{{x}}}  
\def\vy{{{y}}}
\def\vz{{{z}}}
\def\vv{{{v}}}
\DeclareMathAlphabet{\mathsfit}{\encodingdefault}{\sfdefault}{m}{sl}
\SetMathAlphabet{\mathsfit}{bold}{\encodingdefault}{\sfdefault}{bx}{n}
\def\vareps{\varepsilon}
\def\rset{\mathbb{R}}
\newcommand{\scalar}[2]{{\left\langle\,#1,\,#2\right\rangle}} %
\newcommand{\Fdata}{\ensuremath{F}}
\newcommand{\Gprior}{\ensuremath{U}}
\newcommand{\CenteredVcell}[2]{\begin{centering}\begin{sideways}\parbox[c]{#1}{\begin{centering}{\hfill #2 \hfill\textcolor{white}{.}}\end{centering}}\end{sideways}\end{centering}}
\newcommand{\nburnin}{\ensuremath{n_{\mathtt{burnin}}}}
\newcommand{\deltaStable}{\ensuremath{\delta_{\mathtt{stable}}}}
\begin{document}

\title{On Maximum-a-Posteriori estimation with Plug \& Play priors and stochastic gradient descent}

\author{
  Rémi Laumont~\footnotemark[2]~\footnotemark[3]%
    \footnotetext[2]{These authors contributed equally}%
    \footnotetext[3]{Universit\'e de Paris, CNRS, MAP5 UMR 8145, F-75006 Paris, France}%
  \and Valentin De Bortoli~\footnotemark[2]~\footnotemark[4]%
    \footnotetext[4]{Department of Statistics
 University of Oxford
 24-29 St Giles
 OX1 3LB, Oxford
 United Kingdom}%
  \and Andr\'{e}s Almansa~\footnotemark[3]
  \and Julie Delon~\footnotemark[3]~\footnotemark[7]%
  \footnotetext[7]{Institut Universitaire de France (IUF)}
  \and Alain Durmus~\footnotemark[5]%
    \footnotetext[5]{Centre Borelli, UMR 9010, 
    \'Ecole Normale Supérieure Paris-Saclay}%
  \and Marcelo Pereyra~\footnotemark[6]%
    \footnotetext[6]{School of Mathematical and Computer Sciences, Heriot-Watt University \& Maxwell Institute for Mathematical Sciences, Edinburgh, United Kingdom}
  }
\author{
  Rémi Laumont
  \and Valentin De Bortoli \footnote{Corresponding author}
  \and Andr\'{e}s Almansa
  \and Julie Delon
  \and Alain Durmus
  \and Marcelo Pereyra~
  }

\institute{R. Laumont, A. Almansa and J. Delon \at
              Universit\'e de Paris, CNRS, MAP5 UMR 8145, F-75006 Paris, France
           \and
           V. De Bortoli \at
              Department of Statistics,
 University of Oxford,
 24-29 St Giles
 OX1 3LB, Oxford
 U.K. \\ \email{valentin.debortoli@gmail.com}
 \and
           A. Durmus  \at Centre Borelli, UMR 9010, 
    \'Ecole Normale Supérieure Paris-Saclay, France.
\and Marcelo Pereyra \at  Heriot-Watt University \& Maxwell Institute for Mathematical Sciences, Edinburgh, U.K.
}

\date{Received: date / Accepted: date}

\maketitle

\begin{abstract}
Bayesian methods to solve imaging inverse problems usually combine an explicit data likelihood function with a prior distribution that explicitly models expected properties of the solution. Many kinds of priors have been explored in the literature, from simple ones expressing local properties to more involved ones exploiting image redundancy at a non-local scale. In a departure from explicit modelling, several recent works have proposed and studied the use of implicit priors defined by an image denoising algorithm. This approach, commonly known as Plug \& Play (PnP) regularisation, can deliver remarkably accurate results, particularly when combined with state-of-the-art denoisers based on convolutional neural networks. However, the theoretical analysis of PnP Bayesian models and algorithms is difficult and works on the topic often rely on unrealistic assumptions on the properties of the image denoiser. This papers studies maximum-a-posteriori (MAP) estimation for Bayesian models with PnP priors. We first consider questions related to existence, stability and well-posedness, and then present a convergence proof for MAP computation by PnP stochastic gradient descent (PnP-SGD) under realistic assumptions on the denoiser used. We report a range of imaging experiments demonstrating PnP-SGD as well as comparisons with other PnP schemes.        

\keywords{Plug \& Play \and Bayesian imaging \and Stochastic Gradient Descent \and Inverse Problems \and Deblurring \and Inpainting \and Denoising}
\subclass{65K10 \and 65K05 \and 62F15 \and 62C10 \and 68Q25 \and 68U10 \and 90C26} 
\end{abstract}

\section{Introduction}
\label{intro}
Many inverse problems in imaging sciences consider the estimation of an unknown image $\vx \in \mathbb{R}^d$ from an observation $\vy$, related to $\vx$ as follows
\begin{equation}\vy = \rmA(\vx) + n \eqsp , 
\label{eq:forward}
  \end{equation} 
where $\rmA$ is an observation operator and $n$ is additive noise. Equation~\eqref{eq:forward} is commonly referred to as the forward model.  

Recovering $\vx$ from the observation $\vy$ by inverting the observation model is usually an ill-posed or ill-conditioned problem, in the sense that the the solution is not unique, or it is not stable w.r.t. perturbations of the observation $y$. To reduce estimation uncertainty and provide meaningful solutions it is necessary to use additional information about the unknown image $\vx$ so that the estimation problem becomes well-posed\footnote{A problem is said to be well-posed in the sense of Hadamard when a solution exists, is unique, and depends in a Lipschitz continuous manner w.r.t. the observed data $y$.}. 

The Bayesian statistical framework provides a powerful framework to formulate well-posed solutions to such imaging inverse problems. In this framework, the likelihood of the observation $\vy$ given the unknown $\vx$ is described by a statistical model with probability density function $p(\vy|\vx)$, and assumptions on the unknown $\vx$ take the form of a marginal or prior density $p(\vx)$. These densities are usually specified explicitly, either directly or via their potentials $\Gprior(\vx)= -\log p(\vx)$ and $\Fdata(\vx,\vy) = -\log p(\vy|\vx)$. Observed and prior information are then combined by using Bayes' theorem to derive the posterior distribution of $\vx$ given $\vy$, with probability density function given by
 \begin{equation}
p(\vx|\vy) = \frac{p(\vy|\vx)p(\vx)}{\int p(\vy|\tilde{\vx})p(\tilde{\vx})\textrm{d}\tilde{\vx}} \eqsp .
\label{eq:posterior}
\end{equation}
This model underpins our inferences about $\vx|\vy$ and provides the basis for deriving Bayesian estimates. While different Bayesian estimators can then be considered, the Bayesian imaging literature predominantly relies on the maximum-a-posteriori (MAP) estimator
\begin{eqnarray}
\label{eq:MAP}
\hat{\vx}_{\textsc{map}} &=& \argmax_{x \in \rset^d} p(\vx|\vy) \! =\! \argmin_{x \in \rset^d} \left \{\Fdata(\vx,\vy) + \Gprior(\vx)\right \} \eqsp , \end{eqnarray}
which is usually computationally cheaper than other estimators that require computing expectations w.r.t $x|y$, such as the the Minimum Mean Square Error estimator (MMSE) $\hat{\vx}_{\textsc{mmse}} =  \mathbb{E}[\vx|\vy] = \int_{\rset^d} \tilde{x} p(\tilde{x}|y)\textrm{d}\tilde{x}$.

Until recently, most approaches in Bayesian imaging relied on explicit priors such as Markov random fields \cite{MRF-MIT-2011} (with the fields based on the total-variation pseudo-norm and its approximations being particularly prominent examples \cite{Rudin1992,Chambolle04,Louchet2013}), priors expressing sparsity in a transformed domain~\cite{donoho1995noising}, or learning-based priors like patch-based Gaussian mixture models~\cite{Zoran2011,yu2011solving,Teodoro2018scene}. 
Among these priors, log-concave models have been particularly favored for both computational and analytical reasons. With regards to computation, log-concavity leads to formulations for MAP estimation and uncertainty quantification which benefit from the full arsenal of convex optimization tools, scaling efficiently to high-dimensions, with strong and well-known convergence guarantees \cite{chambolle2011first,parikh2014proximal,bubeck2014convex,bauschke2011convex,pereyra2017,repetti_pereyra_2019}. Similarly, log-concavity also enables the use of state-of-the-art Monte Carlo sampling algorithms (see, e.g., \cite{durmus2018efficient,pereyra2020accelerating}). Moreover, from an analytical viewpoint, log-concavity guarantees the well-posedness of $p(x|y)$, and that $\hat{\vx}_{\textsc{map}}$ is formally a Bayesian estimator (as opposed to simply being the point with greatest density w.r.t. the Lebesgue measure, which is a significantly weaker result, see \cite{pereyra2019b} for details).

\paragraph{Computation of the MAP solution.} When the posterior density $p(.|\vy)$ is proper and differentiable, with
$\nabla \log p(.|\vy)$ Lipschitz continuous, it is possible to use
first-order optimisation methods to compute maximisers of $p(.|\vy)$, \ie \ MAP
estimators. The simplest first order optimisation scheme to compute $\hat{\vx}_{\textsc{map}}$ is arguably the gradient descent algorithm, given by an initial state
$X_0 \in \rset^d$ and the following recursion for all $k \in \nset$
\begin{equation}\label{GD}
X_{k+1} = X_{k} - \delta_k \nabla \Fdata (X_k,\vy) - \delta_k \nabla \Gprior ({X}_k) \eqsp ,
\end{equation}
where $(\delta_k)_{k \in \nset} \in (\rset_+)^\nset$ is a sequence of
step-sizes.  The sequence $(X_k)_{k \in \nset}$ converges to critical points of
$p(.|y)$ under mild assumptions on the sequence 
$(\delta_k)_{k\in \nset}$ \cite{nesterov2018lectures} and
$p(.|\vy)$. Alternatively, the stochastic gradient descent (SGD) variant
\begin{equation}\label{SGD}
X_{k+1} = X_{k} - \delta_k \nabla \Fdata (X_k,\vy) - \delta_k \nabla \Gprior ({X}_k)+ {\delta_k} Z_{k+1} \eqsp , 
\end{equation}
where $\ensembleLigne{Z_k}{k \in \nset}$ is a family of i.i.d Gaussian random
variables with zero mean and identity covariance matrix, is more robust to local
minima and saddle points and hence more suitable when $x \mapsto p(x|y)$ is not
log-concave on $\mathbb{R}^d$
\cite{brandiere196algorithmes,bottou2018optimization}. Of course, there are many
optimisation schemes with better convergence properties than SGD (see, e.g.,
\cite{nesterov2018lectures} in the convex case), as well as other dynamics to
construct efficient optimisation algorithms
\cite{kingma2014adam,zou2019sufficient}. Nevertheless, SGD is straightforward to
apply, robust, and has a detailed convergence theory, making it a valuable algorithm in the imaging scientist's toolbox.

\paragraph{Deep learning approaches.}
In the last few years, deep neural networks have become ubiquitous to solve
inverse problems in imaging, showing unmatched performances for point estimation for some specific
problems like image denoising. Deep networks can be trained without explicitly using the
knowledge of the forward
model~\eqref{eq:forward}~\cite{dong2014learning,zhang2017beyond,zhang2018ffdnet,gharbi2016deep,schwartz2018deepisp,gao2019dynamic}
or on the contrary can use this model explicitly via unrolled optimization
techniques~\cite{gregor2010learning,Chen2017,diamond2017unrolled,gilton2019neumann}. One disadvantage of using neural networks to solve imaging inverse problems is that in order to achieve state-of-the-art performance it is usually necessary to train the network for a specific problem configuration - the network must be
retrained if the forward model or any model parameters change significantly. Also, imaging approaches based on neural networks struggle to support more advanced inferences, such as decision-theoretic procedures.

\paragraph{Plug \& Play (PnP) approaches.}
PnP approaches strike a balance between an explicit and fully modular modelling paradigm that represents the likelihood $p(y|x)$ and the prior $p(x)$ explicitly (or the data-fidelity and regularisation terms in a variational formulation), and a purely data-driven approach that seeks to infer the model from data. More precisely, these methods usually combine an explicit
likelihood density or data-fidelity term with a prior or regularisation term that is implicitly defined by an image denoising algorithm $D_\varepsilon$ \cite{arridge_maass_oktem_schonlieb_2019}. This construction takes place at the algorithmic level, as opposed to at an explicit modelling level, by using the denoiser $D_\varepsilon$ in lieu of the gradient $\nabla \Gprior$ (also called score in the literature) or the proximal operator $\prox_{\Gprior}$ within an iterative optimisation scheme to compute $\hat{\vx}_{\MAP}$
\cite{meinhardt2017learning,Zhang2017,chan2017plug,kamilov2017plug,ryu2019plug}. This strategy allows decoupling the data observation model from the regularisation (that can be represented by a neural network denoiser $D_\varepsilon$ learnt from training data), and has been shown to deliver remarkably accurate results for a large panel of inverse problems, particularly when $D_\varepsilon$ is chosen carefully.  The question of the convergence of these PnP algorithms has been the focus of several papers in the
recent years~\cite{ryu2019plug,Xu2020,sun2020scalable}, but it has not been
satisfactorily answered yet, especially with regards to the strong assumptions made on $D_{\vareps}$, on $\Fdata$, and on the algorithm parameters. All of these limitations will be detailed in Section~\ref{sec:pnpmethods}. 

Lastly, many other fundamental questions related to inference with PnP schemes remain largely unexplored, particularly in the context of the Bayesian paradigm. For example, questions related to the correct definition of the Bayesian models, to the existence and well-posedness of the estimators that the PnP scheme seeks to compute, and whether these are proper Bayesian estimators in the sense of decision theory.

\paragraph{Contributions.} The aim of this paper is to significantly improve our theoretical understanding of MAP estimation with \emph{Plug \& Play} priors. We first study some fundamental questions related to the posterior density that are essential for meaningful MAP estimation. We establish easily verifiable conditions such that the PnP posterior density is proper, well-posed, Lipschitz continously differentiable, and stable w.r.t. the parameter $\vareps$ defining the strengh of the denoiser $D_\vareps$. Following on from this, we investigate in detail the convergence of the Plug-and-Play Stochastic Gradient Descent (PnP-SGD) for MAP estimation.
This iterative  algorithm takes the following form: for $X_0 \in \rset^d$ and any $k \in \nset$
 \begin{equation}\label{eq:pnp-sgd}
 X_{k+1} = X_k - \delta_k \nabla \Fdata(X_k,\vy) - (\delta_k/\vareps) (X_k - D_{\vareps}(X_k)) + \delta_k Z_{k+1} 
\eqsp \tag{PnP-SGD},
 \end{equation}
 where $\ensembleLigne{Z_k}{k \in \nset}$ is a family of independent Gaussian
 random variables with zero mean and identity covariance matrix,
 $D_{\vareps}: \ \rset^d \to \rset^d$ is a denoiser operator and
 $(\delta_k)_{k \in \nset}$ is a sequence of step-sizes.  We establish convergence results for PnP-SGD under mild and realistic assumptions on
 $D_{\vareps}$ that hold for several well-known denoisers, including state-of-the-art denoisers based on convolutional neural networks.
 We also provide extensive experiments on several canonical inverse problems,
 implementing PnP-SGD with the denoising neural network presented
 in~\cite{ryu2019plug}, which satisfies our convergence guarantees. Using the
 same denoising network, we also implement other state-of-the-art Plug-and-Play
 methods, and show that all schemes provide close
 results (in terms of image quality) when they converge. Although PnP-SGD is
 often slower than schemes using $D_{\vareps}$ to approximate a proximal
 operator, the conditions of convergence provided in this paper for PnP-SGD are
 less restrictive than those provided in the literature for other PnP schemes,
 and convergence is possible for any regularization parameter balancing the
 weights of the data and prior terms.

 The paper is organized as follows. Section~\ref{sec:pnpmethods} presents an overview of previous works on \textit{Plug \& Play} approaches for MAP
 estimation. Following on from this, Section~\ref{sec:pnp-theory-alg} describes the proposed theoretical framework for analysing MAP estimation with PnP priors, as well as a detailed convergence theory for MAP computation by PnP-SGD.
 Section~\ref{sec:experiments} illustrates the behavior of PnP-SGD, along with
 other PnP schemes, on several classical imaging problems. %

\section{A survey of Plug \& Play methods in imaging}
\label{sec:pnpmethods}
In the context of imaging inverse problems, \emph{Plug \& Play} methods aim at
using a carefully chosen denoiser $D_{\vareps}: \ \rset^d \to \rset^d$ to
implicitly define an image prior. This is achieved by relating $D_{\vareps}$ to
a proximal operator or a gradient associated with the prior density.  In the
first case, $D_\varepsilon$ replaces a MAP estimator for a denoising problem. In
the second case, $D_\varepsilon$ replaces a Minimum Mean Square Error (MMSE)
estimator for a denoising problem, related to the gradient of a log-prior via
Tweedie's identity~\footnote{ Notice that although it is conceptually helpful to
  distinguish these two cases (in order to make a historical and practical
  survey of the subject), there are clear theoretical connections between the
  two approaches.  Indeed, under regularity conditions on the Bayesian model
  involved, MAP denoisers can be expressed as MMSE denoisers under an
  alternative (albeit often unknown) Bayesian model
  \cite{gribonval2011should}. %
However this equivalence can not always be exploited in practice and has been
mostly ignored in the literature on \emph{Plug \& Play} methods until very
recently with the work of Xu \emph{et al.} \cite{Xu2020} to be presented later.}\cite{efron2011tweedie}.

In what follows, we describe how these approaches have been widely used to
compute solutions to inverse problems. In our discussion, we pay particular
attention to questions related to algorithmic convergence, and to the
interpretation of the computed solutions, as this has been an important focus of
the literature. 

\subsection{{{Plug \& Play} MAP estimators using proximal splitting}}
Let $D^\dagger_\varepsilon$ denote the MAP estimator to recover $x$ from a noisy
observation $x_{\vareps} \sim \mathcal{N}(x, \vareps \Id)$ under the assumption
that $x$ has marginal density $p(\vx) \propto \exp[-\Gprior(\vx)]$; that is,
$D^\dagger_\varepsilon(x_{\vareps}) = \argmin_{{x} \in \rset^d} \{\frac12
\|x_{\vareps} - {\vx}\|^2 + \varepsilon \Gprior(\vx)\} = \prox_{\varepsilon
  \Gprior}(x_{\vareps})$. When we set the PnP denoiser $D_\vareps$ such that
$D_\varepsilon = D^\dagger_\varepsilon$, any optimization scheme making use of a
proximal descent on the prior can be used to solve~\eqref{eq:MAP} via
$D_\varepsilon$.

For instance, the alternating direction method of multipliers (ADMM)~\cite{boyd2011distributed} writes the augmented Lagrangian of~\eqref{eq:MAP} as
\[\textstyle{E_{\vareps}(\vx, \vz,\vv) = \Fdata(\vx,\vy) +  \|\vx-\vz\|^2/(2 \vareps)+ \vv^\top (\vx - \vz)+ \Gprior(\vz)\eqsp.}\]
The joint optimization of the augmented Lagrangian is given by 
\[(\hat{\vx}_{\textsc{map}}, \hat{\vz}_{\textsc{map}}) =
  \textstyle{\argmin_{\vx,\vz \in \rset^d}\max_{\vv \in \rset^d}}
  E_{\vareps}(\vx, \vz, \vv).\] This provides the solution
$\hat{\vx}_{\textsc{map}}= \hat{\vz}_{\textsc{map}}$ of~\eqref{eq:MAP} when
$\vareps \to 0$.
 In practice, the joint optimization is solved by an alternate minimization scheme on $\vx$ and $\vz$ and a gradient ascent on $\vu=\vareps\vv$,
 \begin{eqnarray}
 \vx_{k+1} & = \textstyle{\argmin_\vx E_\varepsilon(\vx,\vz_k,\vu_k/\vareps)} & = \prox_{\varepsilon \Fdata(\cdot,\vy)} (\vz_k - \vu_k) \eqsp , \label{eq:ADMM-data-step}
 \\
   \vz_{k+1} & = \textstyle{\argmin_\vz E_\varepsilon(\vx_{k+1},\vz,\vu_k/\vareps)} & = \prox_{\varepsilon \Gprior} (\vx_{k+1} + \vu_k) = D_\varepsilon(\vx_{k+1}+\vu_k) \eqsp ,  \label{eq:ADMM-denoising-step}\\
   \vu_{k+1} & = \vu_k +\vx_{k+1} - \vz_{k+1} \eqsp . 
 \end{eqnarray}
 Similarly, when $\Fdata(.,\vy) $ is differentiable, the simpler
 Forward-backward splitting (FBS) scheme~\cite{combettes2011proximal}, which
 only requires to compute $\nabla F$, can be written in a Plug-and-Play fashion
 as
  \begin{equation}
 \vx_{k+1}  = \prox_{\varepsilon \Gprior}(\vx_{k} - \vareps \nabla F (\vx_{k} ,\vy)) =  D_\varepsilon (\vx_{k} - \vareps \nabla F (\vx_{k} ,\vy)) \eqsp . \label{eq:FBSstep}
\end{equation}
A fully proximal version of this algorithm, called Backward-backward splitting (BBS)~\cite{combettes2011proximal}, writes 
 \begin{equation}
 \vx_{k+1}  = \prox_{\varepsilon \Gprior}( \prox_{\varepsilon \Fdata}(\vx_{k})) =  D_\varepsilon (\prox_{\varepsilon \Fdata}(\vx_{k})) \eqsp .\label{eq:BBSstep}
\end{equation}
BBS aims at solving a slightly modified version of~\eqref{eq:MAP} where $F$ is
replaced by its Moreau envelope with parameter $\varepsilon$. The same algorithm
can be derived using half-quadratic splitting to solve~\eqref{eq:MAP}.
  
When $ \Gprior$ is convex, such splitting schemes and many variants (including 
primal-dual methods, ISTA or FISTA, etc.) are well understood and proved to converge
to the global optimum~\cite{boyd2011distributed}. They have also been
successfully used for non-convex $\Gprior$ like patch-based Gaussian mixture
models (GMM) as pioneered for external learning by Zoran \& Weiss
in~\cite{Zoran2011}. The use of splitting schemes with non-convex GMM priors was
later refined with convergence guarantees for scene-adapted learning
\cite{teodoro2018convergent}. 

Following the seminal work \cite{venkatakrishnan2013plug}, this kind of
splitting schemes have become ubiquitous in cases where $\Gprior$ (and hence
$D^\dagger_\varepsilon$) are unknown and unspecified, but a denoiser
$D_\varepsilon$ is available and assumed to be a good approximation of
$D^\dagger_\varepsilon =
\prox_{\varepsilon\Gprior}$. %
As popular and efficient these methods have become, their convergence properties
have remained largely unknown. Indeed, for most denoisers $D_\varepsilon$, there
is no guarantee that there exists a potential $\Gprior$ such that
$D_\varepsilon = \prox_{\varepsilon\Gprior}$.
In~\cite{Sreehari2015}, Sreehari
\emph{et al.} establish some sufficient conditions for this to happen:
$D_\varepsilon$ must be differentiable, and its Jacobian $J_{D_\varepsilon}$
should be symmetric %
with eigenvalues within the $[0,1]$ interval to ensure
non expansiveness. 
These assumptions hold for transform-domain thresholding
denoisers and for variants of Non Local means \cite{buades2005non} where
symmetry is explicitly enforced~\cite{Sreehari2015}.  However, the two
assumptions are unfortunately false for most popular denoisers, including Non
Local Means~\cite{buades2005non}, BM3D~\cite{dabov2006image}, Non Local
Bayes~\cite{lebrun2013nonlocal} and neural networks denoisers like
DnCNN~\cite{zhang2017beyond}, as observed in
\cite{reehorst2018regularization}. %

\paragraph{Consensus equilibirum / fixed point interpretation.} Since it remains difficult to show that PnP schemes converge to the MAP or even a critical point of~\eqref{eq:MAP}, several authors have proposed to analyse these schemes from a consensus equilibrium point of view~\cite{buzzard2018plug,ahmad2020plug}, or similarly to consider and analyse these approches as fixed-point algorithms~\cite{sun2019online,ryu2019plug}. 
The fixed points attained by these algorithms cannot be interpreted as MAP estimators, but should be seen as solving a set of equilibrium equations involving both the denoiser and the data term. 
For instance, for PnP-FBS, the idea is to show convergence to the set of points
$\vx$ satisfying
$\vx = D_{\varepsilon} (\vx - \varepsilon \nabla \Fdata (\vx ,\vy))$. It can
easily be shown that the fixed points of several of these PnP algorithms (in
particular PnP-ADMM and PnP-FBS)
coincide~\cite{meinhardt2017learning,sun2019online}.

In~\cite{sun2019online}, assuming that such fixed points exist, Sun et al. show
convergence of PnP-ISTA (which is equivalent to PnP-FBS
above) %
under %
the assumptions that $\nabla F$ is $\Ly$-Lipschtitz,
$\varepsilon \Ly \leq 1$ and $D_{\varepsilon}$ is $\theta$-averaged, see
\cite[Definition 4.33]{bauschke2011convex} for a definition. This assumption on
the denoiser is probably too strong, since most denoisers cannot be considered
as averaged operators.  In~\cite{sun2020scalable}, Sun et al. reformulate
PnP-ADMM with different convergence conditions, and still assume quite
restrictive conditions on the denoiser
$D_\varepsilon$~\footnote{In~\cite{sun2020scalable}, the residual
  $\Id -D_\varepsilon $ is assumed to be firmly non expansive, which is
  equivalent to say that $D_\varepsilon$ is firmly non expansive, see
  \cite[Proposition 4.4]{bauschke2011convex}.}.

In~\cite{ryu2019plug}, Ryu \emph{et al.} propose a convergence analysis of
PnP-ADMM, PnP-FBS and PnP-DRS (PnP Douglas-Rachford Splitting), based on the
weaker assumption that the residual operator $D_\varepsilon - \Id$
is $\Ltt$-Lipschitz 
with a Lipschitz constant which depends both on the data fitting term $\Fdata$ and the denoiser $D_{\varepsilon}$. 
The proof also requires $\Fdata$ to be
$\mu$-strongly convex (which excludes all cases where $\rmA$ is not full rank and de facto excludes some of the applications considered in~\cite{ryu2019plug}) and it
imposes quite restrictive assumptions on relative values of $\mu$, $\varepsilon$ and $\Ltt$.

In a similar direction, Xu et al.~\cite{Xu2020}
very recently proposed a convergence study for PnP-ISTA, with the assumption that $\nabla F$ is $\Ly$-Lipschitz
with $\varepsilon \Ly \leq 1$. However, they assume that $D_\varepsilon$ is an
exact MMSE denoiser, \ie \
$D_{\vareps}(x_{\vareps}) = \mathbb{E}[x|x_{\vareps}]$, where $x \sim p$ and
$x_{\vareps} \sim \mathcal{N}(x, \vareps \Id)$ conditionally to $x$. Therefore
their theoretical results do not carry to many classical denoisers, such that
those learned from training data and implemented by neural networks.

\paragraph{Assumptions on algorithm parameters.}
Most of the convergence proofs for PnP algorithms impose restrictive
assumptions on the choice of parameters used in the iterative schemes.
This may exclude interesting ranges of parameters for several inverse
problems. For instance, for PnP-FBS, the parameter $\varepsilon$ (which can be
interpreted as the step of the proximal or gradient descents) and the Lipschitz
parameter $\Ly$ of $\nabla \Fdata$ must typically be chosen such that
$\Ly \varepsilon \leq C$ with $C\in [1,2]$ (see~\cite{ryu2019plug,Xu2020}, the
exact value of $C$ depends on the convergence proof). If
$\Fdata(\vx,\vy) = \frac{1}{2\alpha\sigma^2} \|\rmA \vx- \vy\|^2$, with
$\|\rmA\|\le 1$, it implies that
$\frac 1 \alpha \leq \frac{\sigma^2}{\varepsilon}$. The parameter $\varepsilon$
is imposed by the denoiser $D_{\vareps}$ (the denoiser is trained for a noise of
variance $\varepsilon$), and $\sigma$ is given by the quantity of noise in the
forward model.  If, for instance, the forward model involves a noise standard
deviation $\sigma$ which is 5 times smaller than the one used for the denoiser
$D_{\vareps}$, it means that the penalty $\alpha$ (which balances the respective
weights of the data and prior terms) should be chosen larger than $25$, which
implies that the algorithm will only converges for huge regularizations. We will
see in the experimental section that for this kind of reason the PnP-FBS
algorithm often fails to converge for classical imaging inverse problems, or
converges only for values of $\alpha$ which are not interesting in
practice. Fully proximal algorithms such as PnP-ADMM or PnP-BBS are much more
robust in practice, even when the conditions of their theoretical convergence
are not fully met. The PnP-SGD algorithm that we will introduce in the following
does not suffer from the same convergence limitations.

\paragraph{AMP algorithms.} It is worth mentioning at this point that the Plug-and-Play framework has also
been shown to be very efficient with Approximate Message Passing
algorithms~\cite{ahmad2020plug}. These algorithms have excellent convergence
properties for data terms of the form $\|\rmA\vx - \vy\|^2$ with $\rmA$ belonging to
specific classes of random matrices. This restriction on $\rmA$ does not hold for
the inverse problems considered in the current paper so we focus instead on
classical optimization scheme such as the ones described above.

\subsection{{{Plug \& Play} MAP estimators using gradient descent}} Now, assume that $D_\varepsilon = D^\star_\varepsilon$, where $D^\star_\varepsilon$ is the MMSE estimator to recover $x$ from the noisy observation $x_{\vareps} \sim \mathcal{N}(x, \vareps \Id)$ when $x$ has marginal density $p(x)$; that is,
\begin{equation}\label{eq:optimal_mmse_denoiser}
  \textstyle{
    D^\star_{\vareps}(x_{\vareps}) = \mathbb{E}[x|x_{\vareps}] = \int_{\rset^d} x p({x})  G_\vareps(x_{\vareps} - {x}) \rmd {x} / \int_{\rset^d} p({x})  G_\vareps(x_{\vareps} - {x}) \rmd {x}  \eqsp ,
    }
\end{equation} 
where $G_{\vareps}$ is a Gaussian kernel with variance $\varepsilon$. We introduce the following class of smooth approximations of $p(x)$, defined for any $x \in \rset^d$ by
\begin{equation}\label{eq:prior_epsilon}
\textstyle{
    p_\vareps(x) = \int_{\rset^d} p(\tilde{x}) G_\vareps(x - \tilde{x}) \rmd \tilde{x} \eqsp .}
\end{equation} 
In this case, Tweedie's
identity~\cite{efron2011tweedie} establishes the following relationship between the MMSE denoiser
$D^\star_\vareps$ and \eqref{eq:prior_epsilon}, for
any $x \in \rset^d$
\begin{equation}\label{eq:Tweedie}
  \textstyle{\nabla \Gprior_\varepsilon
  (\vx) = - \nabla \log p_\varepsilon(\vx) = (\vx - D^\star_\varepsilon(\vx))/
  \varepsilon \eqsp ,}
\end{equation}
where $U_{\vareps} = -\log(p_{\vareps})$.  This relation can be used to plug the MMSE denoiser $D^\star_\varepsilon$ in any gradient descent scheme involving $\nabla \Gprior_\varepsilon$ and it is at the
core of the algorithm PnP-SGD presented in this paper. Similarly to the MAP denoiser $D^\dagger_\varepsilon$, the MMSE denoiser $D^\star_\varepsilon$ is usually not known, so PnP methods rely on other denoisers $D_\varepsilon$ that are believed to be good approximations of $D^\star_\varepsilon$. Observe that approximating $D^\star_\varepsilon$ for realistic image priors is precisely the goal of CNN denoisers, while it is much more complicated to approach corresponding MAP denoisers $D^\dagger_\varepsilon$. This makes approaches based on Tweedie's identity particularly attractive. 

A similar relation is derived by Romano \emph{et al.} in~\cite{romano2017little}
where they present the Regularization by Denosing (RED) method, which proposes
an insightful Bayesian formulation of denoiser-based priors as image-adaptive
Laplacian regularisations. Instead of using Tweedie's identity, the RED method
solves equation~\eqref{eq:MAP} via different optimization algorithms (including
gradient descent and ADMM) with explicit regularization
$U_{\vareps}(\vx) = (1/2) \scalar{\vx}{\vx - D_\varepsilon(\vx)}.$ As shown in
\cite{reehorst2018regularization}, under the assumptions that $D_\varepsilon$ is
locally homogeneous and has symmetric Jacobian, this implies that for any
$x \in \rset^d$, $\nabla U_{\vareps} (\vx) = \vx - D_\varepsilon(\vx)$, which is
(up to a scaling factor $1/\varepsilon$) the same expression as Tweedie's
identity in~\eqref{eq:Tweedie}. Unfortunately, as pointed out before, these
assumptions on $D_\varepsilon$ are not strictly satisfied by most commonly used
denoisers \cite{reehorst2018regularization}, although we note that Jacobian
symmetry can be explicitely enforced \cite{Milanfar2013a}.  The convergence of
the RED algorithms for denoisers that do not verify the above-mentioned
assumption remains unproven.  As an alternative interpretation the RED algorithm
can be seen as a way to approximate the score $\nabla \Gprior_{\vareps}$ by
$ (\vx - {D}_\varepsilon(\vx))/ \varepsilon$ in the optimality equation
$\nabla \Fdata + \nabla \Gprior_{\varepsilon} = 0$. Here the optimal MMSE
denoiser ${D}^\star_\varepsilon$ is again replaced by some other denoiser.

More recently, \cite{cohen2020regularization} studies a projected RED estimator which seeks to minimise a data fidelity term subject to the constraint that the solution belongs to the set of fixed points $\{\vx \in \rset^d : \vx = D_\varepsilon(\vx)\}$, thus sharing strong link with the consensus equilibrium interpretation of proximal-based PnP estimators. It is reported in \cite{cohen2020regularization} that when $D_\varepsilon$ is a demi-contractive mapping, its fixed points define a convex set, which allows the construction of provably convergent algorithms for this alternative RED estimator. However, as pointed out in \cite{pesquet2020learning}, verifying that a given denoising operator is demi-contractive is not easy and, to be the best of our knowledge, it is not yet clear what denoisers verify this property. Furthermore, from a Bayesian inference viewpoint, additional studies would be required in order to determine when this projected RED estimator defines or approximates a MAP estimator for a suitable Bayesian model - we leave this as a perspective for future work.

The PnP-SGD optimisation algorithm that will be presented in the next section is very close
to the gradient descent version of RED  presented in~\cite{romano2017little}. We will show that it converges to the vicinity of the solution of~\eqref{eq:MAP}
under much milder conditions than previously assumed, and in particular when
$D_\varepsilon$ is not an exact MAP or MMSE denoiser. Importantly, our
convergence proof is valid for the neural network denoiser used in
\cite{ryu2019plug} (a variant of DnCNN~\cite{zhang2017beyond} with a contractive
residual) and also for the native Non Local Means~\cite{buades2005review}. %

\section{PnP maximum-a-posteriori estimation: analysis and computation}
\label{sec:pnp-theory-alg}

\subsection{Analysis of maximum-a-posteriori estimation with PnP priors}
\label{sec:pnp-sgd}

\label{sec:sgd}
We are interested in MAP estimation for Bayesian models involving PnP priors
that are defined implicitly by an image denoising algorithm $D_\vareps$. We pay
special attention to the highly practically relevant case in which $D_\vareps$
approximates the optimal MMSE denoiser $D_\vareps^\star$ associated to $p$,
i.e., $D_\vareps^\star = \mathbb{E}[\vx|\x_{\vareps}]$ for
$x_{\vareps} \sim \mathcal{N}(x, \vareps \Id)$ when $\vx$ has marginal density
$p$. As mentioned previously, state-of-the-art denoisers based on neural
networks are often trained to approximate $D_\vareps^\star$ by using a sample of
clean images $\{x_i\}_{i=1}^N$ from $p$, a corresponding noisy sample
$\{x_i'\}_{i=1}^N$ with $x_i' \sim \mathcal{N}(x_i, \vareps \Id)$, and choosing
$D_\vareps$ to approximately minimize the empirical MSE loss
$\sum_{i=1}^N\normLigne{D_\vareps(x_i') - x_i}^2$. Similarly, many
state-of-the-art patch-based image denoisers are also designed to approximate
$D_\vareps^\star$.

The fact that $D_\vareps$ is only an approximation of $D_\vareps^\star$ leads to
several complications in the analysis and computation of MAP solutions. For
example, unlike $D_\vareps^\star$, $D_\vareps$ does not define a gradient
mapping in general, and key results such as Tweedie's
identity~\cite{efron2011tweedie} do not hold. Moreover, in the case of neural
network denoisers trained from samples $\{x_i\}_{i=1}^N$ from $p$, the model is
unknown as it is only available through  $\{x_i\}_{i=1}^N$,
making it difficult to check that basic regularity properties required for MAP
estimation are satisfied.

Rather than imposing strong assumptions on $D_\vareps$, we address these difficulties by formulating our analysis in the \textit{M-complete} Bayesian framework, in which we assume that the posterior $p(x|y)$ associated with the true prior $p(x)$ exists but remains largely unknown, and all inference on $\vx|\vy$ are conducted by using operational approximations of this true model \cite{bernardo_smith_bayesian_theory}. In particular, we focus on the class of smooth approximations of $p(x|y)$ given for any $\vareps > 0$ and $x \in \rset^d$ by
\begin{equation}
  \label{eq:posterior_eps}
  p_\vareps(x|y) = \frac{p_\vareps(x)p(y|x)}{\int_{\rset^d}p_\vareps(\tilde{x})p(y|\tilde{x}) \rmd \tilde{x}} \eqsp , 
\end{equation}
where $p_\vareps(x)$ is the smooth approximation of the prior $p(x)$ defined in
\eqref{eq:prior_epsilon}. We will study MAP estimation for $p_\vareps(x|y)$ to
establish that the procedure is well defined, well posed, amenable to efficient
computation, and that it provides a useful approximation to MAP estimation with
the true posterior $p(x|y)$. Following on from this,
Section~\ref{sec:theoretical-analysis} will study the computation of MAP
solutions for $p_\vareps(x|y)$ by using PnP SGD with a generic denoiser
$D_\vareps$ that approximates $D_\vareps^\star$, where we will pay particular
attention to the conditions on $D_\vareps$ required to ensure convergence, as
well as to the bias introduced by using $D_\vareps$ instead of
$D_\vareps^\star$.

It is established in \cite{laumont2020pnpula} that, under basic assumptions on the likelihood function $p(y|x)$ detailed in \textup{H\ref{assum:post}} below, the posterior approximation $p_\vareps(x|y)$ is well defined, proper, and can be made as close to the true posterior $p(x|y)$ as desired by reducing the value of $\vareps$, with the approximation error vanishing as $\vareps \rightarrow 0$. Crucially, \cite{laumont2020pnpula} also establishes that, under \textup{H\ref{assum:post}} and mild assumptions on the optimal MMSE denoiser $D^\star_\vareps$ (essentially, that the denoising problem underlying $D^\star_\vareps$ is well posed in the sense of Hadamard), then $x \mapsto p_\vareps(x|y)$ is differentiable with $x \mapsto \nabla \log p_\vareps(x|y)$ Lipschitz continuous. We conclude that the approximation $p_\vareps(x|y)$ is well defined and amenable to computation by first-order schemes, such as SGD to compute critical points of $p_\vareps(x|y)$ and perform MAP estimation.

\begin{assumption}
\label{assum:post}
For any $y \in \rset^\dimY$, $\sup_{x \in \rset^d} p(y|x) < +\infty$,
$p(y|\cdot) \in \rmc^1(\rset^d, \ooint{0, +\infty})$. In addition, there exists
$\Ltt_y > 0$ such that $\nabla \log p(y|\cdot)$ is $\Ltt_y$ Lipschitz continuous
and $x \mapsto \log p(y|x)$ is real-analytic
\footnote{A function $f: \ \rset^d \to \rset$ is said to be real-analytic if for any
$x_0 = (x_0^1, \dots, x_0^d) \in \rset^d$ there exists
$(a_{n_1, \dots, n_d})_{n_1, \dots, n_d \in \nset} \in \rset^{\nset^d}$ and
$r > 0$ such that for any $x = (x^1, \dots, x^d) \in \ball{x_0}{r}$
\begin{equation*}
  \textstyle{
    f(x) = \sum_{n_1 \in \nset} \dots \sum_{n_d \in \nset} a_{n_1, \dots, n_d} \prod_{j=1}^d (x^j - x_0^j)^{n_j} \eqsp .
    }
  \end{equation*}}
\footnote{The assumption that
$x \mapsto \log(p(y|x))$ is real-analytic is satisfied in all of our experiments
since there exists $\rmA \in \rset^{p \times d}$ and $\sigma > 0$
such that for any $x \in \rset^d$ and $y \in \rset^p$,
$\log p(y|x) = \normLigne{\rmA x - y}^2 / (2 \sigma^2)$.}
\footnote{From Liouville's theorem one could think that the simultaneously verifying that $\nabla \log p(y|\cdot)$ is Lipschitz continuous and that $x \mapsto \log(p(y|x))$ is real-analytic restricts our analysis to models for which $\nabla^2 \log p(y|\cdot)$ is constant (i.e., Gaussian models), but this is not the case because Liouville's theorem applies entire functions, which are a subclass of the real-analytic class.}.
\end{assumption}

With the above-mentioned properties of $p_\vareps(x|y)$ in mind, we wonder if computing a MAP solution for $p_\vareps(x|y)$ provides useful information about a MAP solution for $p(x|y)$. More precisely, we study if critical points for $p_\vareps(x|y)$ are stable w.r.t. variations in $\vareps$, and if they converge to critical points of $p(x|y)$ as $\vareps \rightarrow 0$. Proposition \ref{prop:pnp_sgd_epsilon} below establishes that this is indeed the case. In words, MAP solutions computed with $p_\vareps(x|y)$ are in the neighbourhood of MAP solutions for $p(x|y)$, with $\vareps$ controlling a trade-off between the computational efficiency of first-order schemes and the accuracy of the delivered solutions w.r.t. $p(x|y)$.

Formally, we investigate the dependency of the set of stationary points
$\mss_{\vareps, \msk} = \ensembleLigne{x \in \rset^d}{\nabla \log
  p_{\vareps}(x|y) = 0}$ w.r.t. $\vareps > 0$.  We show
that each cluster point (in the sense of the Hausdorff distance, see below) of
the sequences of sets $(\mss_{\vareps_n, \msk})_{n \in \nset}$ with
$\lim_{n \to +\infty} \vareps_n = 0$ is contained in the set of stationary
points of $x \mapsto p(x|y)$, \ie \ the true posterior, denoted by $\mss_{\msk} = \ensembleLigne{x \in \msk}{\nabla \log p(x|y) = 0}$.

We start by recalling that for any compact set $\msc \subset \rset^{\dim}$, we
have that $\mathcal{K}_{\msc} = \ensembleLigne{\msk}{\text{$\msk$ is compact and
    $\msk \subset \msc$}}$. $(\mathcal{K}_{\msc}, \rmd_{\msc})$ is a metric
space where the metric $d_{\msc}$ is called the Hausdorff distance $\rmd_{\msc}$ and 
is given for any $\msk_1, \msk_2 \in \mathcal{K}_{\msc}$ by
\begin{equation}
  \label{eq:hausdorff}
  \rmd_{\msc}(\msk_1, \msk_2) = \inf \ensembleLigne{\vareps > 0 }{\msk_1 \subset \msk_2 + \cball{0}{\vareps} \eqsp, \msk_2 \subset \msk_1 + \cball{0}{\vareps}} \eqsp ,
\end{equation}
where for any pair of sets $\msa, \msb \subset \rset^{\dim}$ we have
$\msa + \msb = \ensembleLigne{x + y}{x \in \msa, y \in \msb}$. 

Denote $\mse$ the set of sequences $(\vareps_n)_{n \in \nset}$ such that for any
$n \in \nset$, $\vareps_n >0$ and $\lim_{n \to +\infty} \vareps_n = 0$.  For any
sequence $e = (\vareps_n)_{n \in \nset} \in \mse$ denote $\mst_{e, \msk}$ the
set of cluster points of $(\mss_{\vareps_n, \msk})_{n \in \nset}$ with respect
to the Haussdorff distance on $\msk$, $\rmd_{\msk}$ defined in
\eqref{eq:hausdorff}.  We also define
$\mst_{\msk} = \bigcup_{e \in \mse} \mst_{e, \msk}$, \ie \ the collection of the
cluster points for all the sequences $(\vareps_n)_{n \in \nset}$ such that
$\lim_{n \to +\infty} \vareps_n = 0$.  Finally, we define
$\mss_{\msk}^\star = \bigcup_{\mss \in \mst_{\msk}} \mss$ the union of all the
cluster points.

\begin{proposition}
  \label{prop:pnp_sgd_epsilon}
  Assume \textup{H\ref{assum:post}} and that
  $p \in \rmc^1(\rset^d, \ooint{0,+\infty})$ with
  $\normLigne{p}_\infty + \normLigne{\nabla p}_\infty< +\infty$.  Then for any
  compact set $\msk$, $\mss_{\msk}^\star \subset \mss_{\msk}$ with
  $\mss_{\msk} = \ensembleLigne{x \in \msk}{\nabla \log p(x|y) = 0}$.
  \end{proposition}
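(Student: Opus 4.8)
The plan is to reduce the statement to a single analytic fact: for every compact $\msk \subset \rset^d$, the score $\nabla \log p_\vareps(\cdot|y)$ converges to $\nabla \log p(\cdot|y)$ \emph{uniformly on $\msk$} as $\vareps \to 0$. Since the normalising constant in \eqref{eq:posterior_eps} does not depend on $x$, we have $\nabla \log p_\vareps(x|y) = \nabla \log p_\vareps(x) + \nabla \log p(y|x)$ and $\nabla \log p(x|y) = \nabla \log p(x) + \nabla \log p(y|x)$, so it is enough to control $\nabla \log p_\vareps - \nabla \log p$ on $\msk$. Note that both scores are well defined and continuous, since $p$ and $p(y|\cdot)$ are $\rmc^1$ and strictly positive (the latter by \textup{H\ref{assum:post}}), and $p_\vareps = p * G_\vareps$ is smooth and positive.

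First I would prove this uniform convergence. Writing $p_\vareps = p * G_\vareps$ as in \eqref{eq:prior_epsilon} and using $p \in \rmc^1(\rset^d,\ooint{0,+\infty})$ with $\normLigne{p}_\infty + \normLigne{\nabla p}_\infty < +\infty$, differentiation under the integral sign (licit because $\nabla p$ is bounded) gives $\nabla p_\vareps = (\nabla p) * G_\vareps$. A standard mollifier estimate — for $x \in \msk$, split $\int (\nabla p(x-z) - \nabla p(x)) G_\vareps(z)\,\rmd z$ over $\cball{0}{\rho}$, using the uniform continuity of $\nabla p$ on a neighbourhood of $\msk$, and over its complement, using $\normLigne{\nabla p}_\infty$ together with $\int_{\normLigne{z}>\rho} G_\vareps(z)\,\rmd z \to 0$, then let $\rho \to 0$ — yields $\nabla p_\vareps \to \nabla p$ uniformly on $\msk$, and likewise $p_\vareps \to p$ uniformly on $\msk$ (even more easily, as $p$ is Lipschitz). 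Since $p$ is continuous and strictly positive, $\inf_\msk p > 0$, so $\inf_\msk p_\vareps \ge \tfrac12 \inf_\msk p > 0$ for $\vareps$ small; dividing, $\nabla \log p_\vareps = \nabla p_\vareps / p_\vareps \to \nabla p / p = \nabla \log p$ uniformly on $\msk$. Adding the common term $\nabla \log p(y|\cdot)$ gives $\lim_{\vareps \to 0} \sup_{x \in \msk} \normLigne{\nabla \log p_\vareps(x|y) - \nabla \log p(x|y)} = 0$.

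Then I would conclude by a continuity argument on the stationary sets. Fix $\mss \in \mst_\msk$: there is $e = (\vareps_n)_{n\in\nset} \in \mse$ and a subsequence along which $\mss_{\vareps_{n_k},\msk} \to \mss$ in the Hausdorff metric $\rmd_\msk$ of \eqref{eq:hausdorff} (each $\mss_{\vareps_n,\msk}$ is closed and contained in $\msk$, hence lies in $\mathcal{K}_{\msk}$; if infinitely many of them are empty then $\mss = \emptyset \subset \mss_\msk$ trivially, so we may assume they are eventually non-empty). Let $x \in \mss$. From $\mss \subset \mss_{\vareps_{n_k},\msk} + \cball{0}{\eta_k}$ with $\eta_k = \rmd_\msk(\mss_{\vareps_{n_k},\msk},\mss) + 1/k \to 0$, pick $x_k \in \mss_{\vareps_{n_k},\msk}$ with $\normLigne{x - x_k} \le \eta_k$, so $x_k \to x$ and $\nabla \log p_{\vareps_{n_k}}(x_k|y) = 0$. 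Writing
\[
\nabla \log p(x|y) = \bigl(\nabla \log p(x|y) - \nabla \log p(x_k|y)\bigr) + \bigl(\nabla \log p(x_k|y) - \nabla \log p_{\vareps_{n_k}}(x_k|y)\bigr),
\]
the first bracket tends to $0$ by continuity of $\nabla \log p(\cdot|y)$ and $x_k \to x$, while the second is at most $\sup_{z \in \msk} \normLigne{\nabla \log p(z|y) - \nabla \log p_{\vareps_{n_k}}(z|y)} \to 0$ by the previous step (using $x_k \in \msk$). Hence $\nabla \log p(x|y) = 0$, i.e. $x \in \mss_\msk$; as $x \in \mss$ was arbitrary, $\mss \subset \mss_\msk$, and taking the union over $\mst_\msk$ yields $\mss_{\msk}^\star \subset \mss_\msk$.

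The one genuinely delicate point is the \emph{uniform} (rather than merely pointwise) convergence of the scores on $\msk$: this is exactly where the global bounds $\normLigne{p}_\infty + \normLigne{\nabla p}_\infty < +\infty$ enter (for the uniform mollifier estimate) and where strict positivity of $p$ enters (to keep $1/p_\vareps$ uniformly bounded on $\msk$). Everything else — the cancellation of the normalising constant, the $\rmc^1$-regularity ensuring the scores are well defined and continuous, and the translation of Hausdorff convergence of $(\mss_{\vareps_{n_k},\msk})$ into a convergent sequence of stationary points — is routine bookkeeping with the definitions.
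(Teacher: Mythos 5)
Your proof is correct and follows essentially the same route as the paper's: extract a convergent sequence of stationary points from the Hausdorff cluster point, establish uniform convergence on compacts of $p_\vareps$ and $\nabla p_\vareps$ via the standard mollifier splitting (tail controlled by $\normLigne{p}_\infty + \normLigne{\nabla p}_\infty$, near part by uniform continuity on an enlarged compact), use strict positivity of $p$ on $\msk$ to pass to the scores, and take limits in the stationarity equation. The only differences are cosmetic: you package the key step as uniform convergence of the full score on $\msk$ and handle the empty-set corner case explicitly, whereas the paper applies the same uniform estimates directly along the sequence $x_{k_n}$.
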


\begin{proof}
  Let $(\vareps_n)_{n \in \nset} \in \mse$ and $\mss$ a cluster point of
  $(\mss_{\msk, \vareps_n})_{n \in \nset}$. Without loss of generality we assume
  that $\lim_{n \to +\infty} \mss_{\msk, \vareps_n} = \mss$.  Let
  $x^{\star} \in \mss$.  For any $\eta > 0$ there exists $n_{\eta} \in \nset$
  such that for any $n \in \nset$ with $n \geq n_\eta$,
  $\mss \subset \mss_{\msk, \vareps_n} + \cball{0}{\eta}$. Hence, for any
  $n \in \nsets$ there exist an increasing sequence
  $(k_n)_{n \in \nset} \in \nset^{\nset}$ and
  $x_n \in \mss_{\msk, \vareps_{k_n}}$ and $z_n \in \cball{0}{1/n}$ such that
  $x^{\star} = x_{k_n} + z_n$. Since $\lim_{n \to +\infty} z_n = 0$ we get that
  $\lim_{n \to +\infty} x_{k_n} = x^\star$.

  In what follows, we show that $\lim_{n \to +\infty} \nabla \log(p_{\vareps_{k_n}}(x_{k_n})) = \nabla \log p(x^{\star})$.
  First, we show that
    \begin{equation}
      \lim_{n \to +\infty} \max(\absLigne{p - p_{\vareps_{k_n}}}_{\infty, \msk}, \normLigne{\nabla p - \nabla p_{\vareps_{k_n}}}_{\infty, \msk}) = 0 \eqsp .
    \end{equation}
    Indeed, let $f \in \rmc(\rset^d, \rset^p)$ with $p \in \nset$ such that
    $\normLigne{f}_\infty < +\infty$ and denote
    $f_\vareps \in \rmc(\rset^d, \rset^p)$ given for any $x \in \rset^d$ by
    \begin{equation}
      \textstyle{f_\vareps(x) = \int_{\rset^d} f(y) G_\vareps(x-\tilde{x}) \rmd \tilde{x} \eqsp , }
    \end{equation}
    where we recall that for any $\tilde{x} \in \rset^d$,
    $G_\vareps(\tilde{x}) = (2 \uppi \vareps)^{-d/2}
    \exp[-\normLigne{\tilde{x}}^2/(2 \vareps)]$. For ease of notation, we define
    $G = G_1$. Let $\eta > 0$. Then, there exists $R > 0$ such that for any
    $\vareps > 0$ we have
    \begin{equation}
      \label{eq:convo_1}
      \textstyle{ \int_{\normLigne{\tilde{x}} > R} \normLigne{f(x - \vareps^{1/2} \tilde{x}) - f(x)} G(\tilde{x}) \rmd \tilde{x} \leq 2 \normLigne{f}_\infty \int_{\normLigne{\tilde{x}} > R}  G(\tilde{x}) \rmd \tilde{x} < \eta / 2 \eqsp . }
    \end{equation}
    Let $\msk' = \msk + \cball{0}{R}$. We have that $\msk'$ is compact and
    therefore $f$ is uniformly continuous on $\msk'$. Hence there exists
    $\xi > 0$ such that for any $x \in \msk$, $\vareps \in \ocintLigne{0, \xi}$ and
    $y \in \cball{0}{R}$ we have
    \begin{equation}
      \label{eq:convo_2}
      \absLigne{f(x - \vareps^{1/2}y) - f(x)} \leq \eta/2 \eqsp . 
    \end{equation}
    Hence, combining \eqref{eq:convo_1} and \eqref{eq:convo_2} we get that for
    any $x \in \msk$ and $\vareps \in \ocintLigne{0, \xi}$
    \begin{align}
      &\normLigne{f_\vareps(x) - f(x)} \leq \textstyle{ \int_{\rset^d} \normLigne{f(x - \tilde{x}) - f(x)} G_\vareps(\tilde{x}) \rmd \tilde{x} } \\
      &\qquad \quad  \leq \textstyle{ \int_{\rset^d} \normLigne{f(x - \vareps^{1/2} \tilde{x}) - f(x)} G(\tilde{x}) \rmd \tilde{x} } \\
      &\qquad \quad  \leq \textstyle{ \int_{\cball{0}{R}} \normLigne{f(x - \vareps^{1/2} \tilde{x}) - f(x)} G(\tilde{x}) \rmd \tilde{x}} \\
      &\qquad \quad  \qquad + \textstyle{\int_{\cball{0}{R}^\complementary} \normLigne{f(x - \vareps^{1/2} \tilde{x}) - f(x)} G(\tilde{x}) \rmd \tilde{x} }  \\
      &\qquad \quad  \leq \eta/2 + \textstyle{\int_{\cball{0}{R}} \normLigne{f(x - \vareps^{1/2} \tilde{x}) - f(x)} G(\tilde{x}) \rmd \tilde{x}} \leq \eta \eqsp . 
    \end{align}
    Hence $\lim_{\vareps \to 0} \normLigne{f - f_\vareps}_{\infty, \msk} =
    0$. Therefore using this result and that $p \in \rmc^1(\rset^d, \rset)$ with
    $\normLigne{p}_\infty + \normLigne{\nabla p}_\infty < +\infty$ we get that
        \begin{equation}
      \lim_{n \to +\infty} \max(\absLigne{p - p_{\vareps_{k_n}}}_{\infty, \msk}, \normLigne{\nabla p - \nabla p_{\vareps_{k_n}}}_{\infty, \msk}) = 0 \eqsp .
    \end{equation}
    Combining this result, the fact that
    $\lim_{n \to +\infty} x_{k_n} = x^\star$ and that $p^\star > 0$, we get that
    $\lim_{n \to +\infty} \nabla \log(p_{\vareps_{k_n}}(x_{k_n})) = \nabla \log
    p(x^{\star})$. Finally, we obtain that
    \begin{equation}
      \lim_{n \to +\infty} \defEns{\nabla \log p(y|x_{k_n})  + \nabla \log p_{\vareps_{k_n}}(x_{k_n}) } = \nabla \log p(y|x^\star) + \nabla \log p(x^\star) = 0 \eqsp .
    \end{equation}
    Hence, $x^\star \in \mss_{\msk}$ and therefore $\mss_{\msk}^\star \subset \mss_{\msk}$. \qed
\end{proof}

As a third and final point in our analysis, we study if MAP estimation for
$p_\vareps(x|y)$ is a well-posed estimation procedure, which is an essential
requirement for meaningful inference. One would ideally seek to establish the
existence of a unique global maximiser that is Lipschitz continuous
w.r.t. perturbations of the observed data $y$. Unfortunately, this is not
possible without imposing very strong assumptions on the model. Instead,
Proposition~\ref{prop:stab_map} below shows that, under some assumptions on the
likelihood $p(y|x)$, the set of critical points of $p_\vareps(x|y)$ is locally
Lipschitz continuous w.r.t. perturbations of $y$, which is a weaker form of
well-posedness. Notice that the assumptions on the likelihood can be relaxed
when $D_\vareps^\star$ is contractive, but this is usually unrealistic. This
highlights a limitation of MAP estimation by comparison to other Bayesian
estimators, namely MMSE estimation, which is shown in \cite{laumont2020pnpula}
to be well-posed under significantly weaker assumptions.

\begin{proposition}
  \label{prop:stab_map}
  Assume \textup{H\ref{assum:post}}, that
  $(x,y) \mapsto p(y|x) \in \rmc^2(\rset^d \times \rset^p, \rset)$ and
  $p \in \rmc(\rset^d, \rset_+)$. Let $y_0 \in \rset^p$ denote some observed
  data and $x^\star_{y_0}\in \rset^d$ a maximiser of the posterior
  $x \mapsto p(x|y_0)$.  In addition assume either that the Hessian matrix
  $\nabla^2_{x} \log p(y_0|x_0^\star)$ is positive definite, or that the
  Jacobian $\normLigne{\rmd D_\vareps^\star(x_{y_0}^\star)} <1$. Then there
  exists an open set $\msv_0 \subset \rset^p$ and a function
  $x^\star(y) \in \rmc^1(\msv_0, \rset^d)$ such that $y_0 \in \msv_0$ and for
  any $y \in \msv_0$, $x^\star(y)$ is a local minimizer of $x \mapsto p(x|y)$.
\end{proposition}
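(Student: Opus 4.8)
The plan is to obtain the branch $y \mapsto x^\star(y)$ from the implicit function theorem applied to the stationarity equation of the smoothed posterior $p_\vareps(\cdot|y)$ of \eqref{eq:posterior_eps}. Write $\Phi(x,y) = -\log p(y|x) + U_\vareps(x)$ with $U_\vareps = -\log p_\vareps$ and $p_\vareps = p \ast G_\vareps$ as in \eqref{eq:prior_epsilon}; as the normalising constant of $p_\vareps(x|y)$ depends on $y$ only, the critical points of $x \mapsto p_\vareps(x|y)$ are exactly the zeros of
\[
  G(x,y) \;=\; \nabla_x \Phi(x,y) \;=\; -\nabla_x \log p(y|x) + \bigl(x - D_\vareps^\star(x)\bigr)/\vareps \eqsp ,
\]
the second expression coming from Tweedie's identity \eqref{eq:Tweedie}. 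Since $x_{y_0}^\star$ maximises $x \mapsto p_\vareps(x|y_0)$ we have $G(x_{y_0}^\star, y_0) = 0$, so it remains to produce a $\rmc^1$ branch of zeros of $G$ through $(x_{y_0}^\star, y_0)$.

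First I would check that $G \in \rmc^1(\rset^d \times \rset^p, \rset^d)$. By \textup{H\ref{assum:post}} the likelihood is strictly positive and, by assumption, $(x,y) \mapsto p(y|x) \in \rmc^2$, so $(x,y) \mapsto \log p(y|x) \in \rmc^2(\rset^d \times \rset^p, \rset)$. Since $p$ is a continuous probability density, its Gaussian mollification $p_\vareps = p \ast G_\vareps$ is everywhere strictly positive and of class $\rmc^\infty$, whence $U_\vareps \in \rmc^\infty(\rset^d, \rset)$ and $D_\vareps^\star = \Id - \vareps \nabla U_\vareps \in \rmc^\infty(\rset^d, \rset^d)$. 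Therefore $G \in \rmc^1$, with differential in $x$
\[
  \partial_x G(x,y) \;=\; \nabla_x^2 \Phi(x,y) \;=\; -\nabla_x^2 \log p(y|x) + \bigl(\IdM - \rmd D_\vareps^\star(x)\bigr)/\vareps \eqsp .
\]

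The core of the argument is to show that $\partial_x G(x_{y_0}^\star, y_0)$ is invertible. Two structural facts are available: $\rmd D_\vareps^\star(x)$ is symmetric — $D_\vareps^\star$ being the gradient field $x \mapsto \nabla\bigl(\tfrac12 \normLigne{x}^2 + \vareps \log p_\vareps(x)\bigr)$ — and positive semi-definite, equal to $\vareps^{-1}\cov(X | X_\vareps = x)$ for $X \sim p$ and $X_\vareps | X \sim \mathcal{N}(X, \vareps \Id)$; and, because $x_{y_0}^\star$ is a minimiser of $x \mapsto \Phi(x,y_0)$, the symmetric matrix $\partial_x G(x_{y_0}^\star, y_0)$ is positive semi-definite by the second-order optimality condition, so it suffices to rule out a zero eigenvalue. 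This is what the dichotomy in the hypotheses provides: if $\normLigne{\rmd D_\vareps^\star(x_{y_0}^\star)} < 1$, then symmetry and positive semi-definiteness give $0 \preceq \rmd D_\vareps^\star(x_{y_0}^\star) \prec \IdM$, so the prior block $\nabla^2 U_\vareps(x_{y_0}^\star) = (\IdM - \rmd D_\vareps^\star(x_{y_0}^\star))/\vareps$ is positive definite; if instead $-\nabla_x^2 \log p(y_0|x_{y_0}^\star)$ is positive definite, the data-fidelity block is positive definite. In either case I would argue that, combining this with positive semi-definiteness of the full matrix and with the uniform bound on $\nabla_x^2 \log p(y_0|\cdot)$ supplied by \textup{H\ref{assum:post}}, $\partial_x G(x_{y_0}^\star, y_0)$ admits no kernel vector and is therefore non-singular.

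With invertibility in hand, the implicit function theorem applied to $G$ at $(x_{y_0}^\star, y_0)$ produces an open set $\msv_0 \subset \rset^p$ with $y_0 \in \msv_0$ and a map $x^\star \in \rmc^1(\msv_0, \rset^d)$ satisfying $x^\star(y_0) = x_{y_0}^\star$ and $G(x^\star(y),y) = 0$ on $\msv_0$, that is, $x^\star(y)$ is a critical point of $x \mapsto p_\vareps(x|y)$. By continuity of $(x,y) \mapsto \partial_x G(x,y)$ and positive-definiteness of $\partial_x G(x_{y_0}^\star, y_0)$, shrinking $\msv_0$ keeps $\partial_x G(x^\star(y),y)$ positive definite on $\msv_0$, so $x^\star(y)$ is in fact a strict local maximiser of $x \mapsto p_\vareps(x|y)$ (equivalently, a strict local minimiser of the negative log-posterior). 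I expect the invertibility step to be the main obstacle: upgrading ``the full Hessian is positive semi-definite by optimality, and one of its two blocks is positive definite by hypothesis'' to ``the full Hessian is positive definite'' genuinely requires exploiting the structure of $\nabla^2 U_\vareps$ — its symmetry and its two-sided spectral control — rather than a soft matrix-inequality manipulation; the regularity of $G$ and the concluding implicit-function-theorem step are routine.
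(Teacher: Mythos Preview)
Your proposal follows the same route as the paper: establish that $D_\vareps^\star$ is smooth, write down the stationarity map $G(x,y)=\nabla_x\Phi(x,y)$, argue that $\partial_x G(x_{y_0}^\star,y_0)$ is invertible under either hypothesis, and invoke the implicit function theorem, finishing by a continuity argument to upgrade the critical point to a strict local extremum. The paper's proof is terser on every step; in particular, the invertibility you flag as ``the main obstacle'' is disposed of in the paper by a one-line assertion, whereas you supply the relevant structure (symmetry of $\rmd D_\vareps^\star$ as the Jacobian of a gradient, its positive semi-definiteness via the conditional-covariance identity $\rmd D_\vareps^\star(x)=\vareps^{-1}\cov(X\mid X_\vareps=x)$, and the second-order necessary condition at the maximiser) that one would need to justify it --- so your write-up is, if anything, more complete than the paper's on this point.
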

\begin{proof}
  First, using that $p \in \rmc(\rset^d, \rset_+)$ we have that for any
  $v \in \rset^d$ and $c \in \rset$ there exists $\msa \in \mcb{\rset^d}$ such
  that $\int_\msa \abs{\langle x, v \rangle -c}p(x) \rmd x > 0$. Hence, we can
  apply \cite[Lemma II.1]{gribonval2011should} and
  $D_\vareps \in \rmc^\infty(\rset^d, \rset^d)$.
  
  Note that
  $(x,y) \mapsto p(x|y) \in \rmc^2(\rset^d \times \rset^p, \rset_+)$.  Since
  $\nabla^2 \log p(x_{y_0}^\star|x_0)$ is positive there exists
  $\msu_1 \subset \rset^d$ open and $ \msv_1 \subset \rset^p$ open such that for
  any $x \in \msu_1$ and $y \in \msv_1$, $\nabla^2 \log p(y|x)$ is
  positive. Hence, for any $y \in \msv_1$, $x \in \rset^d$ is a local minimizer
  if and only $\nabla \log p(y|x) = 0$.

  Let $F \in \rmc^1(\msu_1 \times \msv_1, \rset^d)$ given for any
  $x \in \rset^d$ and $y \in \rset^p$ by
  \begin{equation}
    F(x,y) = \nabla_x \log p(x|y) = (x - D_\vareps^\star(x))/\vareps + \nabla \log p(y|x) \eqsp . 
  \end{equation}
  Using that either $\normLigne{\rmd D_\vareps^\star(x_{y_0}^\star)} \leq 1$ or that
  $\nabla^2 \log p(y_0|x_0^\star)$ is positive, we get that
  $\nabla_x F(x_{y_0}^\star, y_0)$ is invertible. Therefore using the implicit
  function theorem, there exists $\varphi \in \rmc^1(\msv_0, \rset^d)$ such that
  for any $y \in \msv_0$, $F(\varphi(y), y) =0$, \ie \ $\varphi(y)$ is a local
  minimizer of $x \mapsto \log p(y|x)$ which concludes the proof.\qed
\end{proof}

To conclude, a major challenge in understanding Bayesian inference with PnP priors and providing guarantees for the delivered solutions is that the underlying prior and posterior densities $p(x)$ and $p(x|y)$ are unknown. Also, the image denoiser $D_\vareps$ used to construct PnP schemes is not usually directly related to the model. Instead, when it approximates the optimal MMSE denoiser $D^\star_\vareps$, it is indirectly related to the model via Tweedie's identity and the smooth approximations $p_\vareps(x)$ and $p_\vareps(x|y)$. We establish that these operational approximations are useful for MAP inference for $x|y$, in the sense that they are well defined, proper, and MAP solutions for $p_\vareps(x|y)$ can be made arbitrarily close to the true MAP solutions through the choice of $\vareps$. Importantly, under some assumptions, MAP solutions for $p_\vareps(x|y)$ are well posed and amenable to efficient computation by first order optimisation methodology.

\subsection{PnP-SGD and convergence}
\label{sec:theoretical-analysis}

We are now ready to study the computation of MAP solutions for $p_\vareps(x|y)$ by using PnP SGD with a generic denoiser $D_\vareps$ that approximates $D_\vareps^\star$. We pay particular particular attention to the conditions on $D_\vareps$ required to ensure convergence, and to the bias introduced by using $D_\vareps$ instead of $D_\vareps^\star$.

We begin by using Tweedie's identity to express SGD to compute critical points of $p_\vareps(x|y)$ as the
following sequence: $X_0 \in \rset^d$ and for any $k \in \nset$
\begin{equation}\label{eq:SGD_parfait}
X_{k+1} = X_{k} - \delta_k \nabla \Fdata (X_k,\vy) - \delta_k/\vareps (X_k - D_\vareps^\star(X_k))+ {\delta_k} Z_{k+1} \eqsp ,
\end{equation}
where $(\delta_k)_{k \in \nset} \in (\rset_+)^{\nset}$ is a family of
step-sizes, $\vareps >0$, 
and $\ensembleLigne{Z_k}{k \in \nset}$ a family of i.i.d. Gaussian
random variables with zero mean and identity covariance matrix.  We
recall that the sequences $(X_k)_{k \in \nset}$ and
$(Z_k)_{k \in \nset}$ are defined on an underlying probability space
$(\Omega, \mcf, \mathbb{P})$.

As mentioned previously, in most practically relevant cases $D_\vareps^\star$ is an abstract quantity that cannot be computed. Instead, we have a different denoiser $D_\vareps$ that can be assumed to be a good approximation of $D_\vareps^\star$.
For example, when we have access to samples $\{x_i\}_{i=1}^N$
from $p$ we can consider a noisy version of these samples $\{x_i'\}_{i=1}^N$
with level $\vareps > 0$ and train a neural network based denoiser $D_\vareps$
to minimize the loss $\sum_{i=1}^N\normLigne{D_\vareps(x_i') - x_i}^2$. This
loss corresponds to the empirical version of
$\expeLigne{\normLigne{D_\vareps(x_\vareps) - x}^2}$ (with $x \sim p$ and
$x_{\vareps} \sim \mathcal{N}(x, \vareps \Id)$ conditionally to $x$) whose
minimizer is the MMSE $D_\vareps^\star$. 

Using a generic denoiser $D_\vareps$ in our SGD scheme in lieu of
$D_\vareps^\star$ we obtain the Plug \& Play SGD algorithm associated with
following recursion: $X_0 \in \rset^{\dim}$ and for any $k \in \nset$
\begin{align}
  \label{eq:pnpsgd}
  X_{k+1} &= X_k + \delta_k (b_{\vareps}(X_k)  +  Z_{k+1}) \eqsp , \\
  b_{\vareps}(x) &= \nabla \log(p(y|x)) + \alpha (D_{\vareps}(x) - x) / \vareps \eqsp ,
\end{align}
where we note that we have introduced a regularization parameter $\alpha > 0$ that controls the amount of regularisation enforced by $D_{\vareps}$. The original SGD algorithm is recovered by setting $\alpha=1$ and $D_{\vareps}=D^\star_{\vareps}$.

\begin{algorithm}
	\caption{PnP-SGD}\label{algo:pnp-sgd}
	\begin{algorithmic}

		\REQUIRE $n, n_{\mathtt{burnin}} \in \nset$, $y \in \rset^\dimY$, $\vareps,  \alpha, \delta > 0$
		
		\STATE \textbf{Initialization:} Set $X_0 = \tilde{x}$ and $k=0$.
		
		\FOR{$k=0:N$}
		
		\STATE $Z_{k+1} \sim \mathcal{N}(0, \Id)$
		
		\IF{$k\leq n_{\mathtt{burnin}}$}
		
		\STATE $X_{k+1} = X_k + \delta_0 \nabla \log(p(y|X_k)) + (\delta_0 \alpha / \vareps) (D_{\vareps}(X_k) -X_k)  + \delta_0 Z_{k+1}$

		\ENDIF

		\IF{$k\geq n_{\mathtt{burnin}}$}

		\STATE $X_{k+1} = X_k + \delta_k \nabla \log(p(y|X_k)) + (\delta_k \alpha / \vareps) (D_{\vareps}(X_k) -X_k)  + \delta_k Z_{k+1}$

		\STATE $\delta_{k+1} = \delta_0 (k+1-n_{\mathtt{burnin}})^{-0.8}$
		
		\ENDIF

		\ENDFOR
		
		\STATE \textbf{return} $X_{N}$

	\end{algorithmic}
\end{algorithm}

We now turn to the proof of convergence of \pnpsgd .
The asymptotic estimates we derive in this work are only valid for sequences
which remain in a compact set $\msk$, which is a classical assumption in
stochastic approximation
\cite{tadic:doucet:2017,delyon1999convergence,delyon1996general,metivier1984applications}.
Under tighter conditions on $x \mapsto \log p_\vareps(x|y)$ this limitation can
be circumvented using the global asymptotic results of \cite[Theorem
A1.1]{tadic:doucet:2017}. Another way to remove this restriction would be to
consider an additive term of the form $x \mapsto (x - \Pi_{\msc}(x)) / \lambda$
in $b_\vareps$ (where $\Pi_{\msc}$ is the projection onto some compact convex
set $\msc$ and $\lambda > 0$ some hyperparameter) which ensures the stability of
the numerical scheme. We leave this analysis for future work. In practice, we have not observed any stability issues for PnP-SGD provided that the stepsize is chosen appropriately see Section~\ref{sec:convergence_conditions}.

In what follows, we show that the bias of \pnpsgd \ depends on
the distance between $D_{\vareps}$ and the MMSE estimator $D^\star_{\vareps}$, using recent results from
\cite{tadic:doucet:2017}. %

\begin{assumption}
  \label{assum:neural_net}
  Assume that there exist $\bvareps > 0$, $\Ltt \geq 0$ and a function
  $\Mtt: \mathbb{R}^+ \to \mathbb{R}^+$ such that for any
  $\vareps \in \ocint{0, \bvareps}$, $R \geq 0$, $x_1, x_2 \in \rset^d$ and
  $x \in \cball{0}{R}$ we have
  \begin{equation}
    \label{eq:cond_neural_net}
      \norm{D_{\vareps}(x_{1}) - D_{\vareps}(x_{2})} \leq \Ltt \norm{x_{1} - x_{2}} \eqsp , \qquad  
      \norm{D_{\vareps}(x) - D_{\vareps}^{\star}(x)} \leq \Mtt(R) \eqsp ,
  \end{equation}
  where we recall that \begin{equation}
    \label{eq:def_d_star}
    \textstyle{
      D_{\vareps}^{\star}(x_{1}) =  \int_{\rset^{\dim}} \tilde{x} \ g_{\vareps}(\tilde{x} | x_{1}) \rmd \tilde{x} \eqsp ,
      }
    \end{equation}
    with $\tilde{x} \mapsto g_\vareps(\tilde{x}|x)$ the probability density of
    $X$ given $X_\vareps=x$ where $X_\vareps \sim \mathcal{N}(X, \vareps \Id)$
    conditionally to $X$ and $X \sim p$.
  \end{assumption}

  The first part of \eqref{eq:cond_neural_net} regarding the smoothness property
  of the denoiser can be explicitly verified for a certain class of neural
  networks by adding a spectral regularization term for each layer of the neural
  network, see \cite{ryu2019plug,miyato2018spectral}. The second condition follows from carefully selecting the loss of the neural network
  as in the Noise2Noise network introduced in \cite{lehtinen2018noise2noise} and
  controlling the population error, see \cite{laumont2020pnpula}.

  We are now ready to state Proposition \ref{prop:cv_pnpsgd} which ensures that
  stable PnP-SGD sequences are close to the set of stationary points of
  $x \mapsto \log p_\vareps(x|y)$ where $x \mapsto \log p_\vareps(x|y)$ is given
  in \eqref{eq:posterior_eps}. The distance to this set of stationary points is
  controlled by the approximation error of the network $D_\vareps$.

\begin{proposition}
  \label{prop:cv_pnpsgd}
  Assume \textup{H\ref{assum:post}}, \textup{H\ref{assum:neural_net}}. Let
  $\alpha > 0$ and $\vareps \in \ocint{0, \vareps_0}$.  Assume that
  $\lim_{k \to +\infty} \delta_k = 0$, $\sum_{k \in \nset} \delta_k = +\infty$
  and $\sum_{k \in \nset} \delta_k^2 < +\infty$. Let $R > 0$,
  $\msk \subset \cball{0}{R}$ be a compact set, $X_0 \in \rset^{\dim}$ and
  $\msa_{\vareps, \msk} \in \mcf$ given by
  \begin{equation}\msa_{\vareps, \msk} = \ensembleLigne{\omega \in
    \Omega}{\text{there exists $k_0 \in \nset$ such that for any
      $k \geq k_0$, $X_k(\omega) \in \msk$.}} \eqsp ,
\end{equation}
where $(X_k)_{k \in \nset}$ is given by \eqref{eq:pnpsgd}.  Then there exist
$C_{\vareps, \msk} \geq 0$ and $r_{\vareps, \msk} \in \ooint{0,1}$ such that
$\limsup_{k \to +\infty} d(X_k(\omega), \mss_{\vareps, \msk}) \leq C_{\vareps,
  \msk} \Mtt(R)^{r_{\vareps, \msk}}$ for any $\omega \in \msa_{\vareps, \msk}$,
with
\begin{equation}
 \mss_{\vareps, \msk} = \ensemble{x \in \msk}{\nabla \log p_\vareps(x|y)  = 0} \eqsp ,
\end{equation}
where $x \mapsto p_\vareps(x|y)$ is given in \eqref{eq:posterior_eps}.
\end{proposition}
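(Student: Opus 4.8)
The plan is to recognize \eqref{eq:pnpsgd} as a stochastic approximation recursion of Robbins--Monro type with a \emph{biased} drift, and to apply the machinery of \cite{tadic:doucet:2017}. Write $b_\vareps(x) = h_\vareps(x) + e_\vareps(x)$, where $h_\vareps(x) = \nabla \log p(y|x) + \alpha\,(D_\vareps^\star(x) - x)/\vareps = \nabla \log p_{\vareps}(x|y)$ up to the regularisation constant, is the ``ideal'' mean field, and $e_\vareps(x) = \alpha\,(D_\vareps(x) - D_\vareps^\star(x))/\vareps$ is the systematic error. First I would check that, under \textup{H\ref{assum:post}} and \textup{H\ref{assum:neural_net}}, the ideal field $h_\vareps$ is the gradient of the smooth function $x \mapsto \log p_\vareps(x|y)$ (using Tweedie's identity \eqref{eq:Tweedie} and the fact that $p_\vareps$ is a Gaussian convolution, hence $\rmc^\infty$ and real-analytic when combined with the real-analyticity of $\log p(y|\cdot)$ from \textup{H\ref{assum:post}}), so that $x \mapsto \log p_\vareps(x|y)$ serves as a Lyapunov function, and that on the compact set $\msk \subset \cball{0}{R}$ the error term is uniformly bounded by $\norm{e_\vareps(x)} \leq (\alpha/\vareps)\,\Mtt(R)$ thanks to the second bound in \eqref{eq:cond_neural_net}. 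The Lipschitz bound in \eqref{eq:cond_neural_net} together with \textup{H\ref{assum:post}} gives the requisite regularity of $b_\vareps$ (Lipschitz drift), and the Gaussian increments $Z_{k+1}$ are a bona fide martingale-difference noise with bounded conditional moments, so the standard noise conditions for \cite{tadic:doucet:2017} hold on $\msa_{\vareps,\msk}$.

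The crux is the use of a \emph{{\L}ojasiewicz-type inequality} for the analytic Lyapunov function, which is exactly what powers the quantitative asymptotics in \cite{tadic:doucet:2017}. Since $x \mapsto \log p_\vareps(x|y)$ is real-analytic (Gaussian smoothing of the bounded $\rmc^1$ prior $p$, multiplied by the real-analytic likelihood), its restriction to the compact $\msk$ satisfies a {\L}ojasiewicz gradient inequality: there exist $c > 0$ and an exponent $\zeta \in (0,1]$ such that $\abs{\log p_\vareps(x|y) - \log p_\vareps(x^\star|y)}^{1-\zeta} \leq c\,\norm{\nabla \log p_\vareps(x|y)}$ near each critical point. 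Then I would invoke the relevant convergence theorem of \cite{tadic:doucet:2017} (their single-timescale stochastic-gradient result with biased gradient estimates): under the step-size conditions $\delta_k \to 0$, $\sum \delta_k = \infty$, $\sum \delta_k^2 < \infty$, the iterates confined to $\msk$ satisfy $\limsup_k d(X_k, \mss_{\vareps,\msk}) \leq C\,\|e_\vareps\|_{\infty,\msk}^{r}$ for a constant $C$ and an exponent $r$ that depends on the {\L}ojasiewicz exponent $\zeta$. Folding in $\|e_\vareps\|_{\infty,\msk} \leq (\alpha/\vareps)\Mtt(R)$ and absorbing the constant $(\alpha/\vareps)^r$ into $C_{\vareps,\msk}$ gives the claimed bound with $r_{\vareps,\msk} = r \in (0,1)$.

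The main obstacle, I expect, will be verifying precisely that the hypotheses of the cited stochastic-approximation theorem are met --- in particular that the set $\mss_{\vareps,\msk}$ of critical points on $\msk$ is the correct limit set (this needs the {\L}ojasiewicz inequality to rule out the iterates drifting along a connected component of critical points or escaping to the boundary of $\msk$, which is why restriction to $\msa_{\vareps,\msk}$ is imposed) and that the error bound in \textup{H\ref{assum:neural_net}} translates into the right ``perturbed gradient'' hypothesis. A secondary technical point is confirming that $\nabla \log p_\vareps(\cdot|y)$ is globally Lipschitz on $\rset^d$ (not merely on $\msk$), which follows because differentiating $\log p_\vareps$ brings down at most two Gaussian-weighted moments that are controlled by $\norm{p}_\infty + \norm{\nabla p}_\infty < +\infty$ and by the Lipschitz constant $\Ltt_y$ from \textup{H\ref{assum:post}}; this regularity is what lets one apply the ODE method cleanly. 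Everything else --- the martingale noise bounds, the step-size Robbins--Monro conditions, compact confinement --- is routine bookkeeping once the analytic Lyapunov structure and the uniform bias bound are in place.
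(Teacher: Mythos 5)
Your proposal follows essentially the same route as the paper: decompose the drift into the ideal gradient $\nabla \log p_\vareps(\cdot|y)$ plus a bias bounded by $\Mtt(R)/\vareps$ on $\cball{0}{R}$ via \textup{H\ref{assum:neural_net}}, verify the noise and regularity assumptions of \cite{tadic:doucet:2017}, establish real-analyticity of the objective so that the {\L}ojasiewicz mechanism applies, and invoke their Theorem 2.1 to get the quantitative $\limsup$ bound with exponent $r_{\vareps,\msk} \in \ooint{0,1}$. The one step you assert rather than prove is the real-analyticity of $\log p_\vareps$; the paper devotes most of its proof to this, extending $p_\vareps = p \ast G_\vareps$ to an analytic function on $\cset^d$ via the Fourier inversion formula and composing with the complex logarithm on a neighbourhood where $\Re(\bar{p}_\vareps) > 0$, but your claim is correct and the rest of your argument matches the paper's.
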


\begin{proof}
  Let $\vareps >0$ and $\omega \in \msa_{\vareps, \msk}$. For any $k \in \nset$,
  let $\zeta_k = Z_{k+1}$ and
  $\eta_k = b_{\vareps}(X_k) - \nabla \log p(y|X_k) - \nabla \log
  p_{\vareps}(X_k)$.  Using H\ref{assum:neural_net} we have for any
  $k \in \nset$,
  \begin{equation}
   \normLigne{b_\vareps(X_k) - \nabla \log p(y|X_k) - \nabla \log
    p_{\vareps}(X_k)} = \vareps^{-1} \normLigne{D_{\vareps}(X_k) -
    D_{\vareps}^{\star}(X_k)} \leq \Mtt(R)/\vareps \eqsp . 
  \end{equation}
  Hence, we obtain that \cite[Assumption 2.1, Assumption 2.2]{tadic:doucet:2017}
  are satisfied.  In what follows, we show that \cite[Assumption
  2.3.c]{tadic:doucet:2017} holds.  First, we introduce
  $G_\vareps: \ \rset^d \to \rset$ given for any $x \in \rset^d$ by
  \begin{equation}
   G_{\vareps}(x) = (2\uppi \vareps)^{-d/2} \exp[- \normLigne{x}^2/(2\vareps)]  \eqsp . 
  \end{equation}
  We have that for any $x \in \rset^{\dim}$,
  $p_{\vareps}(x) = (p \ast G_{\vareps})(x)$, where $\ast$ denotes the
  convolution product. Since $p, G_{\vareps} \in \mathrm{L}^1(\rset^{\dim})$ we
  get that for any $\xi \in \rset^{\dim}$,
  $\widehat{p \ast G_{\vareps}}(\xi) = \hat{p}(\xi)
  \hat{G_{\vareps}}(\xi)$. Since $p \in \mathrm{L}^1(\rset^{\dim})$,
  $\| \hat{p} \|_{\infty} < +\infty$ using Riemann-Lebesgue theorem and in
  addition $\hat{G}_{\vareps}(\xi) = \exp[- \vareps \norm{\xi}^2 / 2]$. Hence,
  $\widehat{p \ast G_{\vareps}} \in \mathrm{L}^1(\rset^{\dim})$ and we obtain
  that  for almost every $x \in \rset^{\dim}$
  \begin{equation}
    p_{\vareps}(x) = \textstyle{\int_{\rset^{\dim}} \hat{p}(\xi)
  \hat{G_{\vareps}}(\xi) \exp[\rmi \langle x, \xi \rangle] \rmd \xi \eqsp .}
\end{equation}
In the rest of the proof, we denote
$\bar{p}_{\vareps} : \ \cset^{\dim} \to \cset$ given for any
$z=(z^1, \dots, z^d) \in \cset^{\dim}$ by
$\bar{p}_{\vareps}(z) = \int_{\rset^{\dim}} \hat{p}(\xi) \hat{G_{\vareps}}(\xi)
\exp[\rmi \langle z, \xi \rangle] \rmd \xi$ where for any $z_1, z_2 \in \cset^d$
we have $\langle z_1, z_2 \rangle = \sum_{j=1}^d z_1^j \bar{z}_2^j$.  We have
that $\bar{p}_{\vareps}$ is analytic using the dominated convergence
theorem. Since for any $x \in \rset^{\dim}$, $p_{\vareps}(x) > 0$ and
$\bar{p}_{\vareps} \in \rmc(\cset^{\dim}, \cset)$, there exists an open set
$\msu \subset \cset^{\dim}$ such that for any $z \in \msu$,
$\Re(\bar{p}_{\vareps}(z)) > 0$. Since
$\log : \cset \backslash (\ensembleLigne{t \in \cset}{\Re(t) \leq 0}) \to \cset$
is analytic we obtain that $z \mapsto \log \bar{p}_{\vareps}(z)$ is analytic on
$\msu$. Hence, $x \mapsto \log p(y|x) + \log p_{\vareps}(x)$ is real-analytic on
$\rset^{\dim}$.  We conclude using \cite[Theorem 2.1]{tadic:doucet:2017}.
\end{proof}

The proof can be extended to the case where $Z_k =0$ using \cite[Theorem
2.1]{tadic:doucet:2017}. In this case the assumption that
$\sum_{k \in \nset} \delta_k^2 < +\infty$ can be replaced by
$\lim_{k \to +\infty} \delta_k = 0$.

The following experimental section demonstrates the PnP-SGD algorithm
on three canonical imaging inverse problems, namely image deblurring,
inpainting, and denoising, along with other standard PnP algorithms. 

\section{Experimental study}\label{sec:experimental}
\label{sec:experiments}

In this section, we study the behaviour of several PnP algorithms for three
classical inverse problems: denoising, deblurring and inpainting.  We recall
that in each of these problems we consider a prior model
$p(x) \propto \exp[-U(x)]$ which is unknown and that the inference $x|y$ is
obtained by approximation of this model. For the deblurring and denoising
problems, the log-posterior of the degradation model can be written for any
$x, y \in \rset^d$ as
\begin{equation}
\label{logposterior_exp}
  - \log p(\vx|\vy) = \|\rmA \vx - \vy\|^2/(2\sigma^2) + \alpha \Gprior(x) + C \eqsp ,
\end{equation}
where $\rmA$ is a $d \times d$ matrix, $C \geq 0$ is a constant and the
parameter $\alpha \geq 0 $ balances the weights of the log-likelihood
$\Fdata(\vx,\vy)$ and the log-prior $U$. In this case, we have for any
$x, y \in \rset^d$, $F(x,y) = \|\rmA \vx - \vy\|^2/(2\sigma^2)$.  In our
inpainting experiments, we change the likelihood so that pixels are either
visible or hidden. In this case the log-posterior can be written for any
$x \in \rset^d$ and $y \in \rset^m$ as
\begin{equation}
\label{logposterior_inpainting}
  - \log p(\vx|\vy) =
  \iota_{\rmQ \vx =  \vy} + \alpha \Gprior(\vx) + C \eqsp ,   
\text{ with } \iota_\msc(\vx) = \begin{cases}0 & \text{ if } \vx \in \msc \\ +\infty &\text{ otherwise,}\end{cases}
\end{equation}
with $\rmQ$ a $m\times d$ matrix consisting of $m$ random lines from the
$d \times d$ identity matrix.

\subsection{Image dataset}
In Figures~\ref{fig:dataset-part1} and ~\ref{fig:dataset-part2} we present the 6
original images used in the experiments. These images contain both geometric
structures, constant areas and textured regions. On the same figures, we display
degraded versions of each image for each set of experiments. For the denoising
experiment, the level of the Gaussian noise is fixed to $\sigma^2 =
(30/255)^2$. In the case of deblurring, the operator $\rmA$ correponds to a
$9 \times 9$ uniform blur operator, and we add Gaussian noise with variance
$\sigma^2=(1/255)^2$. Finally, in the context of inpainting, we hide $80\%$ of
the pixels.

\begin{figure}
    \centering

\begin{tabular}{cccc}

& Alley & Bridge & Cameraman \\

\CenteredVcell{0.2\textwidth}{Clean Images} & 
\includegraphics[width=0.2\textwidth]{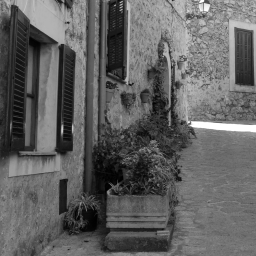} &
\includegraphics[width=0.2\textwidth]{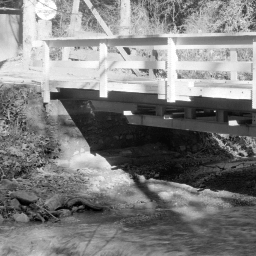} &
\includegraphics[width=0.2\textwidth]{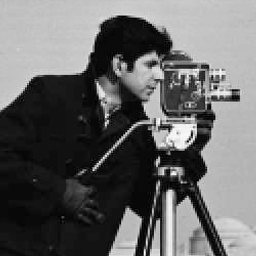} \\

\CenteredVcell{0.2\textwidth}{Denoising} & 
\includegraphics[width=0.2\textwidth]{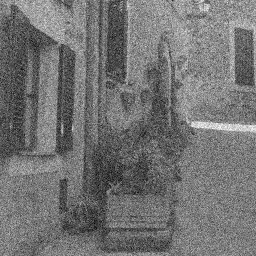} &
\includegraphics[width=0.2\textwidth]{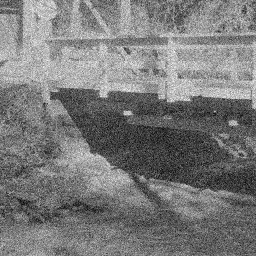} &
\includegraphics[width=0.2\textwidth]{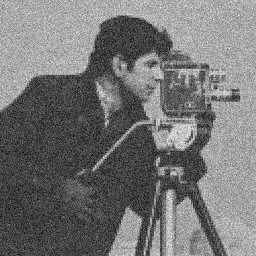} \\

\CenteredVcell{0.2\textwidth}{Deblurring} & 
\includegraphics[width=0.2\textwidth]{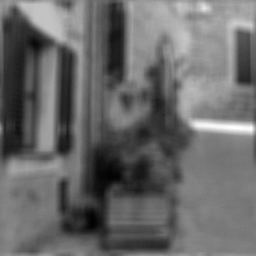} &
\includegraphics[width=0.2\textwidth]{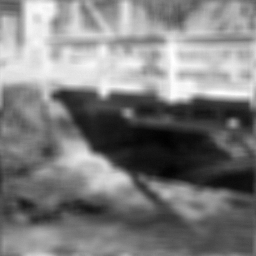} &
\includegraphics[width=0.2\textwidth]{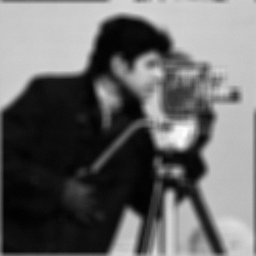} \\

\CenteredVcell{0.2\textwidth}{Inpainting} & 
\includegraphics[width=0.2\textwidth]{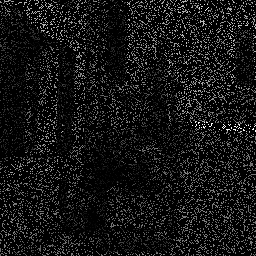} &
\includegraphics[width=0.2\textwidth]{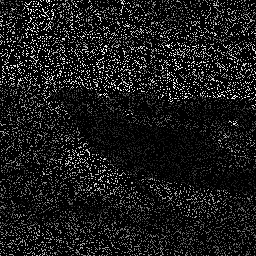} &
\includegraphics[width=0.2\textwidth]{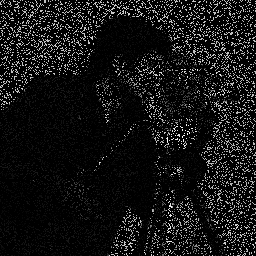}
\end{tabular}

\caption{\emph{Dataset (part 1):} First three images in our dataset, and examples of degraded images for the three inverse problems considered in this paper. For denoising, we add a Gaussian noise with variance $\sigma^2 = (30/255)^2$. For deblurring, the operator $\rmA$ correponds to a $9 \times 9$ uniform blur operator, and we add Gaussian noise with variance $\sigma^2=(1/255)^2$. For inpainting, we hide $80\%$ of the pixels.}
\label{fig:dataset-part1}
\end{figure}

\begin{figure}
    \centering
\begin{tabular}{cccc}

& Goldhill & Simpson & Traffic \\

\CenteredVcell{0.2\textwidth}{Clean Images} & 
\includegraphics[width=0.2\textwidth]{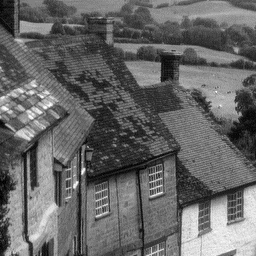} &
\includegraphics[width=0.2\textwidth]{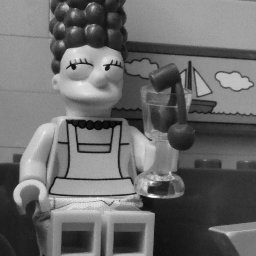} &
\includegraphics[width=0.2\textwidth]{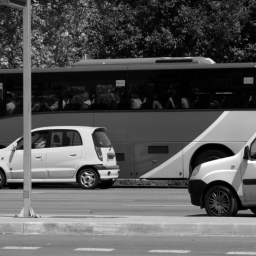} \\

\CenteredVcell{0.2\textwidth}{Denoising} & 
\includegraphics[width=0.2\textwidth]{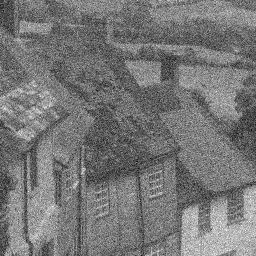} &
\includegraphics[width=0.2\textwidth]{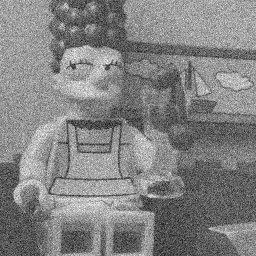} &
\includegraphics[width=0.2\textwidth]{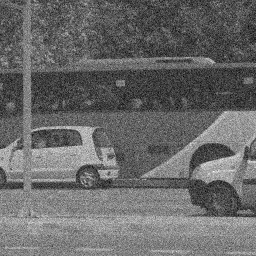} \\

\CenteredVcell{0.2\textwidth}{Deblurring} & 
\includegraphics[width=0.2\textwidth]{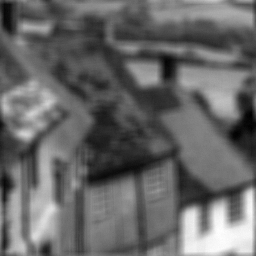} &
\includegraphics[width=0.2\textwidth]{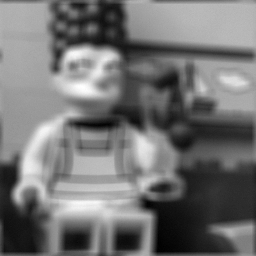} &
\includegraphics[width=0.2\textwidth]{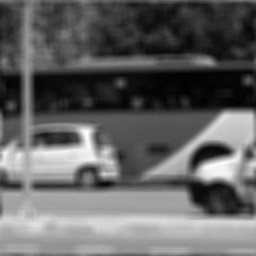} \\

\CenteredVcell{0.2\textwidth}{Inpainting} & 
\includegraphics[width=0.2\textwidth]{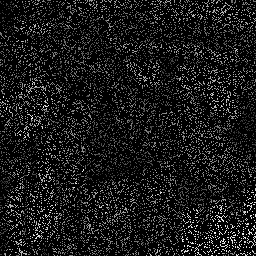} &
\includegraphics[width=0.2\textwidth]{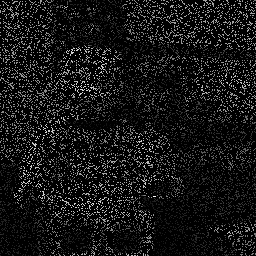} &
\includegraphics[width=0.2\textwidth]{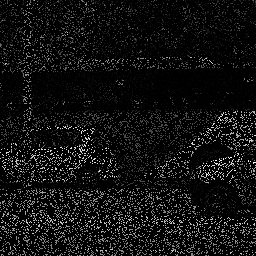}
\end{tabular}
\caption{\emph{Dataset (part 2):} Last three images in our dataset, and examples of degraded images for the three inverse problems considered in this paper. For denoising, we add a Gaussian noise with variance $\sigma^2 = (30/255)^2$. For deblurring, the operator $\rmA$ correponds to a $9 \times 9$ uniform blur operator, and we add Gaussian noise with variance $\sigma^2=(1/255)^2$. For inpainting, we hide $80\%$ of the pixels.}
\label{fig:dataset-part2}
\end{figure}

\subsection{Algorithms}

In this section, we evaluate PnP-SGD (Algorithm~\ref{algo:pnp-sgd}) along with
three other classical PnP algorithms: PnP-ADMM (Algorithm~\ref{algo:pnp-admm}),
PnP-FBS (Algorithm~~\ref{algo:pnp-fbs}) and PnP-BBS
(Algorithm~\ref{algo:pnp-bbs}).  %
Note that in the case of inpainting the log-likelihood is not differentiable,
since $\iota_C$ is not differentiable. In Section~\ref{sec:inpainting} we will
present an extension of these PnP algorithms to this setting using proximal
operators.

In order to take into account the parameter $\alpha > 0$ into
Algorithms~\ref{algo:pnp-admm}-\ref{algo:pnp-fbs}-\ref{algo:pnp-bbs}, we slightly modify the target
function. Instead of minimizing $x \mapsto -\log p(x|y)$ we aim at minimizing
$x \mapsto -\log p(x|y)/\alpha$. Doing so the parameter $\alpha > 0$ can be
included in the parameters of the log-likelihood which becomes
$(x,y) \mapsto F(x,y)/\alpha$.  All algorithms are implemented using Python and
the PyTorch library. Our experiments are run on an Intel Xeon CPU E5-2609 server
with a Nvidia Titan XP graphic card.

\begin{algorithm}
	\caption{PnP-ADMM}\label{algo:pnp-admm}
	\begin{algorithmic}

		\REQUIRE $n \in \nset$, $y \in \rset^\dimY$, $\vareps > 0, \alpha > 0$, $x_0 \in \rset^d$
		
		\STATE \textbf{Initialization:} Set $\vx_0 = \vz_0$, and $u_k = 0$.
		
		\FOR{$k=0:N$}
		
		\STATE $\vx_{k+1} = \prox_{(\varepsilon /\alpha) \Fdata(\cdot,\vy)} (\vz_k - u_k) $
		
		\STATE $\vz_{k+1} = D_{\varepsilon} (\vx_{k+1} + u_k)$ 
		
		\STATE $u_{k+1} = u_k + (\vx_{k+1} - \vz_{k+1})$ 
		
		\ENDFOR
		
		\STATE \textbf{return} $x_{N+1}$
			
	\end{algorithmic}
\end{algorithm}

\begin{algorithm}
	\caption{PnP-FBS}\label{algo:pnp-fbs}
	\begin{algorithmic}

          \REQUIRE $n \in \nset$, $y \in \rset^\dimY$, $\vareps>0$,
          $\alpha > 0$, $x_0 \in \rset^d$	
		
		\FOR{$k=0:N$}
		
		\STATE $\vx_{k+1} = D_{\varepsilon} \left(\vx_k -  (\vareps/\alpha) \nabla \Fdata(\vx_k,\vy)  \right)$
		
		\ENDFOR
		
		\STATE \textbf{return} $x_{N+1}$
			
	\end{algorithmic}
\end{algorithm}

\begin{algorithm}
	\caption{PnP-BBS}\label{algo:pnp-bbs}
	\begin{algorithmic}

          \REQUIRE $n \in \nset$, $y \in \rset^\dimY$, $\vareps>0$, $\alpha>0$,
          $x_0 \in \rset^d$

		\FOR{$k=0:N$}
		
		\STATE $\vx_{k+1} = D_{\varepsilon} (\prox_{(\vareps/\alpha) \Fdata(\cdot,\vy)}(\vx_k) )$
		
		\ENDFOR
		
		\STATE \textbf{return} $x_{N+1}$
	\end{algorithmic}
\end{algorithm}

\subsection{Parameters settings and convergence conditions}
\label{sec:convergence_conditions}

In this section, we recall and discuss the choice of the different parameters,
as well as the convergence conditions for PnP-SGD. We also discuss the
convergence properties of PnP-ADMM and PnP-FBS following the guidelines
of~\cite{ryu2019plug,Xu2020}.

Recall that from \eqref{eq:posterior_eps}, we denote $\Ltt_y$ the Lipschitz
constant of the log-likelihood gradient $x \mapsto \nabla \Fdata(.,\vy)$. For
$\Fdata(\vx,\vy) = \|\rmA \vx - \vy\|^2/(2\sigma^2)$,
$\Ltt_y = \|\rmA^\star\rmA\| /\sigma^2$, with $\rmA^\star$ the adjoint of
$\rmA$.  $\Fdata$ is $\mu$-strongly convex if and only if $\rmA$ is invertible,
in which case $\mu = \lambda_{\min}(\rmA)^2/\sigma^2$, where
$\lambda_{\min}(\rmA)$ is the smallest singular value of $\rmA$. In our
experiments we have $\lambda_{\min}=1$ for denoising and $\lambda_{\min} =0$ for
deblurring and inpainting.  In our experiments, the operator $\rmA$ is always
chosen such that $\|\rmA^\star\rmA\|=1$.  Note that if $\Fdata$ is replaced by
$\Fdata/\alpha$, as it the case in
Algorithms~\ref{algo:pnp-admm}-\ref{algo:pnp-fbs}-\ref{algo:pnp-bbs}, we have
that $\Ltt_y$ and $\mu$ are replaced by $\Ltt_y/\alpha$ and $\mu/\alpha$.

\paragraph{Denoiser.} In all experiments, the denoising operator $D_{\vareps}$
is chosen as the pretrained denoising neural network introduced
in~\cite{ryu2019plug}. This denoiser is trained so that $\Id- D_{\vareps}$ is
$\Ltt$-Lipschitz with $\Ltt < 1$. Note that this corresponds to the first part
of \eqref{eq:cond_neural_net} in H\ref{assum:neural_net}. In \cite{ryu2019plug}
three pretrained denoisers, at noise level
$\vareps = (5/255)^2, (15/255)^2, (40/255)^2$ are proposed. In this work, we
only use the first one in our denoising and deblurring experiments. The
inpainting problem requires a more subtle strategy relying on a coarse to fine
approach, described in Section~\ref{sec:inpainting}.

\paragraph{PnP-SGD.}
In Algorithm~\ref{algo:pnp-sgd}, we consider a burn-in regime with a constant
step $\delta_0$ until some iteration $n_{\mathrm{burnin}}$. After this initial
phase, we set $(\delta_k)_{k \in \nset}$ to be a decreasing sequence satisfying
the conditions of Proposition~\ref{prop:cv_pnpsgd}.  In the case of denoising or
deblurring, $\delta_0$ is given by
\begin{equation}
  \label{eq:deltastable}
  \delta_0 =  \deltaStable/6, \text{ where } \deltaStable:=
  2/\Ltt_{\mathrm{tot}}  \eqsp , \quad \Ltt_{\mathrm{tot}} =  2/(\alpha \Ltt /\varepsilon +
    \|\rmA^*\rmA\|/\sigma^2) \eqsp , 
\end{equation}
where $\Ltt_{\mathrm{tot}}$ is the Lipschitz constant of
$\nabla \log p (. | \vy)$. Note that setting $\delta_0 = \deltaStable$ ensures
that the deterministic scheme: $x_0 \in \rset^d$ and for any $k \in \nset$,
$x_{k+1} = x_k + \delta_0 \nabla \log p (\vx_k | \vy)$, satisfies that
$(\log p(x_k |y))_{k \in \nset}$ is non-decreasing. %
After the burn-in, we use a decreasing sequence of step-sizes
$(\delta_k)_{k \in \nset}$ such that for any $k \in \nset$ we 
have \begin{equation}\delta_k :=\delta_0 \times (k-\nburnin)^{-0.8} \eqsp , 
\label{eq:deltak}\end{equation}
which satisfies the conditions required in Proposition~\ref{prop:cv_pnpsgd} for
convergence. Note that contrary to existing work, any value of $\alpha > 0$ can
be used in Algorithm~\ref{algo:pnp-sgd} provided that $\delta_0$ is defined
accordingly using \eqref{eq:deltastable}.

\paragraph{PnP-ADMM.}

The convergence results of ~\cite{ryu2019plug} for PnP-ADMM require the strong
convexity of $\Fdata$. In our experiments, this condition is met for denoising
experiments (since $\rmA = \Id$), but not for inpainting nor deblurring if the
blur operator is not invertible (which is the case for a $9\times 9$ uniform
blur).  In the denoising case, following~\cite{ryu2019plug}, PnP-ADMM converges
to a fixed point if $\Ltt \in [0,1)$ and
$ \Ltt /(1+\Ltt(1-2\Ltt)) < \vareps/(\alpha \sigma^2)$.  In practice, this
condition is not satisfied, see Section~\ref{sec:denoising}. However,
Algorithm~\ref{algo:pnp-sgd} experimentally converges  to a fixed
point with interesting visual properties. This suggests that it might be
possible to prove the convergence of PnP-ADMM under weaker conditions than the
ones of \cite{ryu2019plug}.

\paragraph{PnP-FBS.} Similarly to PnP-ADMM the convergence results obtained
by~\cite{ryu2019plug} for PnP-FBS are only valid in a strongly convex
setting. In our case this corresponds to the denoising experiment here. The
condition on the Lipschitz constant of the denoiser $D_\vareps$ is
$\Ltt / (1+\Ltt) < \vareps / (\alpha \sigma^2) < (\Ltt +2)/(\Ltt+1)$.  In
Section \ref{sec:denoising}, we show that these conditions are not met in our
experiments.  In practice, we still observe convergence of the algorithm for the
denoising experiments. This is no longer case in non-strongly convex problems,
see Section~\ref{sec:deblurring} and
Section~\ref{sec:inpainting}. In~\cite{Xu2020}, convergence towards the set of
stationnary points of the log-posterior is established for PnP-FBS provided that
$D_{\vareps} = D_\vareps^\star$, \ie \ $D_\vareps$ is the optimal MMSE. In
addition, \cite{Xu2020} requires that $\vareps \Ltt_{y} \leq 1$. This condition
implies that $\vareps \|\rmA^\star\rmA\|\leq \alpha \sigma^2$. Since
$\|\rmA^\star\rmA\| = 1$ for all our experiments, this implies
$\alpha \geq \vareps / \sigma^2$. In experiments with large noise level (as it
it the case for our denoising setting), this leads to acceptable values of
$\alpha$. However, when $\sigma$ is small in comparison to $\vareps$ (which is
the case for deblurring), the regularisation parameter $\alpha$ for which the
convergence is ensured is too highlighted in
Section~\ref{sec:convergence_conditions}.

\subsection{Denoising}
\label{sec:denoising}
For these denoising experiments, we add a Gaussian noise of variance
$\sigma^2 = (30/255)^2$ (see the second row of Figures~\ref{fig:dataset-part1}
and~\ref{fig:dataset-part2} for examples of degraded images). In this experiment
we use a denoiser $D_{\vareps}$ trained for a noise level $\vareps = (5/255)^2$
on a dataset $\{x_i,x_i'\}_{i=1}^N$ with $x_i \sim p$ and
$x_i' \sim \mathcal{N}(x_i, \vareps \Id)$ for any $i \in \{1, \dots, N\}$. Using
this denoiser in Algorithms~\ref{algo:pnp-sgd}-\ref{algo:pnp-bbs}, we aim at
denoising $y$ with noise level $\sigma^2$.

We run all algorithms for several values of the regularization parameter
$\alpha$ and for two different initializations: first a
$\mathrm{TV}$-$\mathrm{L}_2$ initialization, \ie \ applying a simple
$\mathrm{TV}$-$\mathrm{L}_2$ restoration to the noisy image following~\cite{Rudin1992,chambolle2011first}, and second an oracle initialization (using the original
image without degradation). Our goal here is to assess the dependency of the
algorithm on initialization, since the log-posterior we study is highly
non-convex. %

For PnP-SGD, the initial step-size $\delta_0$ and the sequence
$(\delta_k)_{k \in \nset}$ are defined as explained in
Section~\ref{sec:convergence_conditions}. For these denoising experiments, the
resulting value of $\delta_0$ is already quite small, such that decreasing
$\delta_k$ after the burn-in phase effectively stops the search for a better
optimum and does not change the result.  The number of iterations \nburnin\ for
the burn-in was set between 5000 and 25000 for SGD.  Within that range, we stop
this phase as soon as
$|\mathrm{PSNR}(X_{k+1})-\mathrm{PSNR}(X_k)|< 0.1 \times \delta_0$. This
conservative choice allows to make sure that the algorithm reaches its steady
state, so that the oracle initialization (starting from an overestimated value of $\mathrm{PSNR}$) does not
overestimate the global maximum and the non-oracle initializations (starting from an underestimated value of 
$\mathrm{PSNR}$) do not under-estimate it. In practice,
convergence is reached after a few hundreds of iterations in most cases and only
rarely did the algorithm iterate beyond 5000. Increasing $\delta_0$ to
$\delta_0 = 0.9 \times \deltaStable$ also permits to achieve faster convergence,
but in this case adding a decreasing phase for $(\delta_k)_{k \in \nset}$ after
the burn-in regime is important to achieve the same asymptotic results.

For the splitting-based algorithms (ADMM, BBS, FBS), practical convergence is
very fast and 100 iterations are largely sufficient in all cases. Observe that
since we use a denoiser trained for a noise level $\vareps = (5/255)^2$, and our
denoising experiments are run for $\sigma^2=(30/255)^2$, theoretical convergence
of PnP-ADMM following~\cite{ryu2019plug} requires that
$\alpha < (1+\Ltt(1-2\Ltt)) /36 \Ltt$. The exact value of $\Ltt$ for the
denoising considered in \cite{ryu2019plug} is not available, but our experiments
suggest that $\Ltt \approx 1$. This implies that only drastically small values
of $\alpha$ meet the previous condition. As a result, this condition is not
satisfied with the choices of $\alpha$ that are experimentally optimal but does
not prevent the algorithm to converge in practice.  In the same way, provided
that $\Ltt \in [0,1)$, convergence of PnP-FBS following~\cite{ryu2019plug}
implies that $\alpha$ is at least larger than $18$, see
Section~\ref{sec:convergence_conditions}. Yet, interesting values of $\alpha$
for this denoising experiment are far smaller The condition provided
in~\cite{Xu2020}, $\alpha \geq \vareps / \sigma^2 = 1/3$ gives more realistic
values for $\alpha$ but we remind that in this case we must assume that
$D_\vareps = D_\vareps^\star$.

Figure~\ref{fig:denoising-graphs} summarizes the results of this denoising
experiment on 10 independent random noise realizations on each of the 6 images
in the dataset, for PnP-SGD, PnP-ADMM and PnP-BBS (PnP-FBS is not shown here for
the sake of clarity, but it shows a very similar behavior). We first observe
that initialization seems to play a very minor role for all the algorithms
considered in this problem. A $\mathrm{TV}$-$\mathrm{L}_2$ initialization is
sufficient to reach virtually the same reconstruction quality as the oracle
initialization. This might be explained by the fact that denoising is a
relatively simple inverse problem.  Second, all algorithms produce very similar
results, with an optimal value of $\alpha$ around $0.25$, see Figure~\ref{fig:denoising-graphs}. Table~\ref{tab:denoising-TVL2-optimal-alpha} summarizes the denoising
results of all algorithms (including PnP-FBS) obtained for this nearly optimal
setting of $\alpha=0.25$.  In Figure~\ref{fig:denoising-goldhill-k3} we display
the results of the different algorithms for this denoising experiment. If the
$\mathrm{PSNR}$ values are quite close, it seems that the algorithms make
different compromises in terms of visual results. For example, the estimator
obtained with PnP-ADMM seems to exhibit sharper edges. However, it also seems to
hallucinate more false structures than other algorithms.

\begin{figure}
\centering
\hspace*{-1em}\begin{tabular}{c|ccc}
$\mathrm{TV}$-$\mathrm{L}_2$ init & \multicolumn{3}{c}{$\mathrm{TV}$-$\mathrm{L}_2$ vs. oracle init} \\

{SGD vs ADMM vs BBS} & SGD & ADMM & BBS \\

\includegraphics[width=0.23\textwidth]{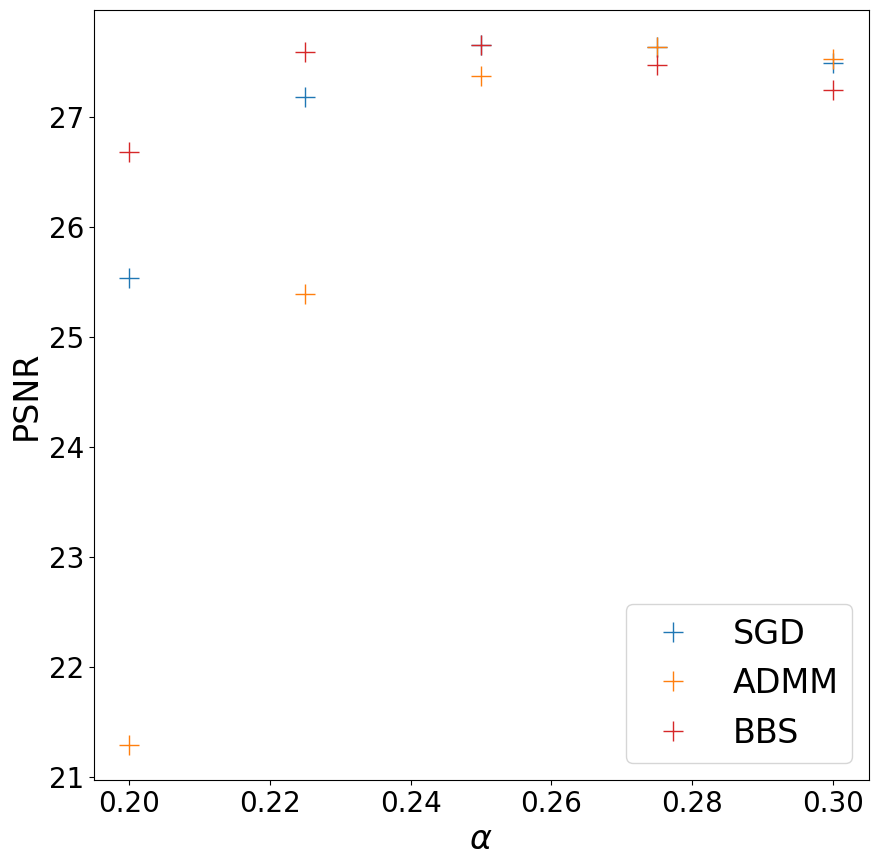} &
\includegraphics[width=0.215\textwidth,trim=40 0 0 0,clip]{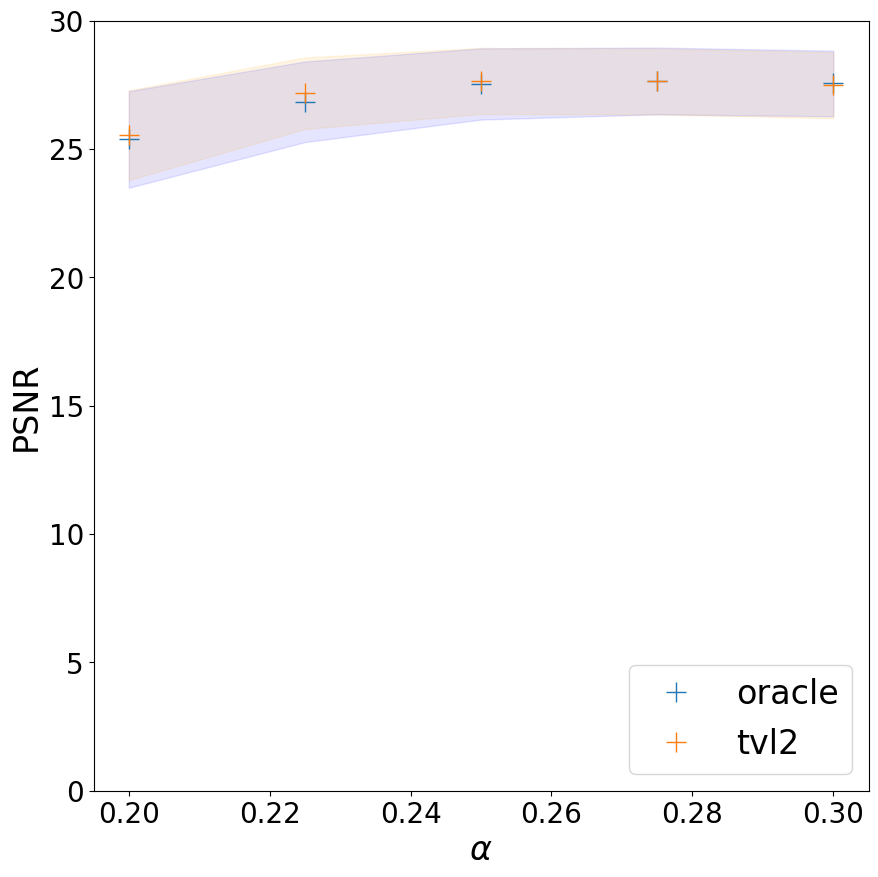} &
\includegraphics[width=0.215\textwidth,trim=40 0 0 0,clip]{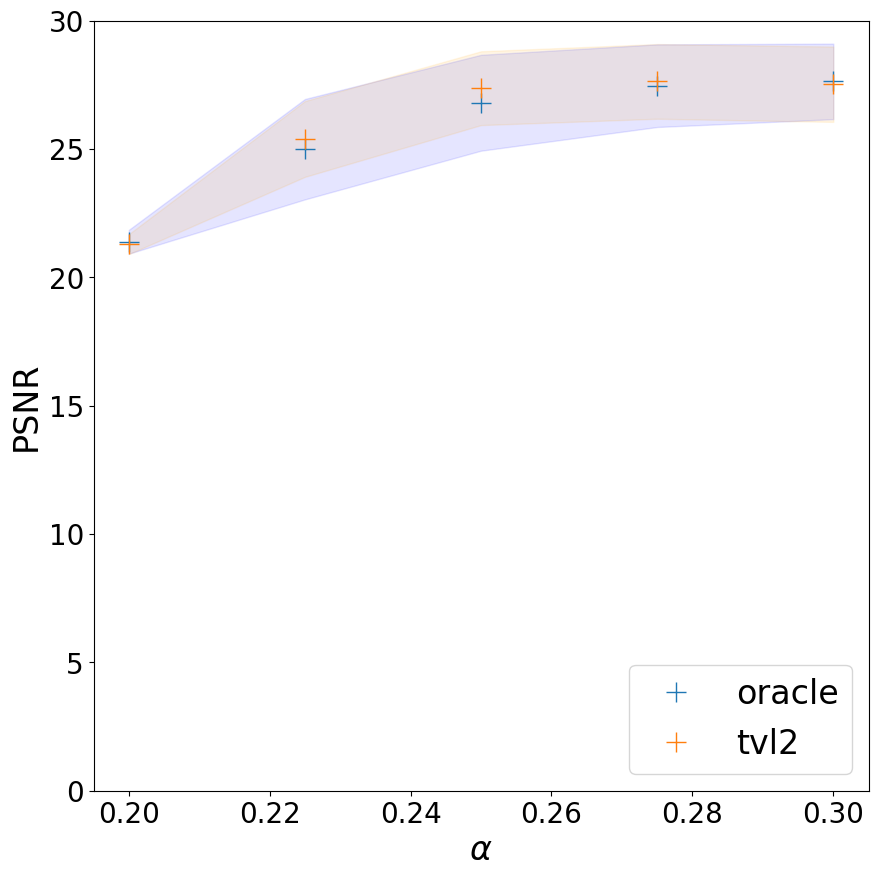} &
\includegraphics[width=0.215\textwidth,trim=40 0 0 0,clip]{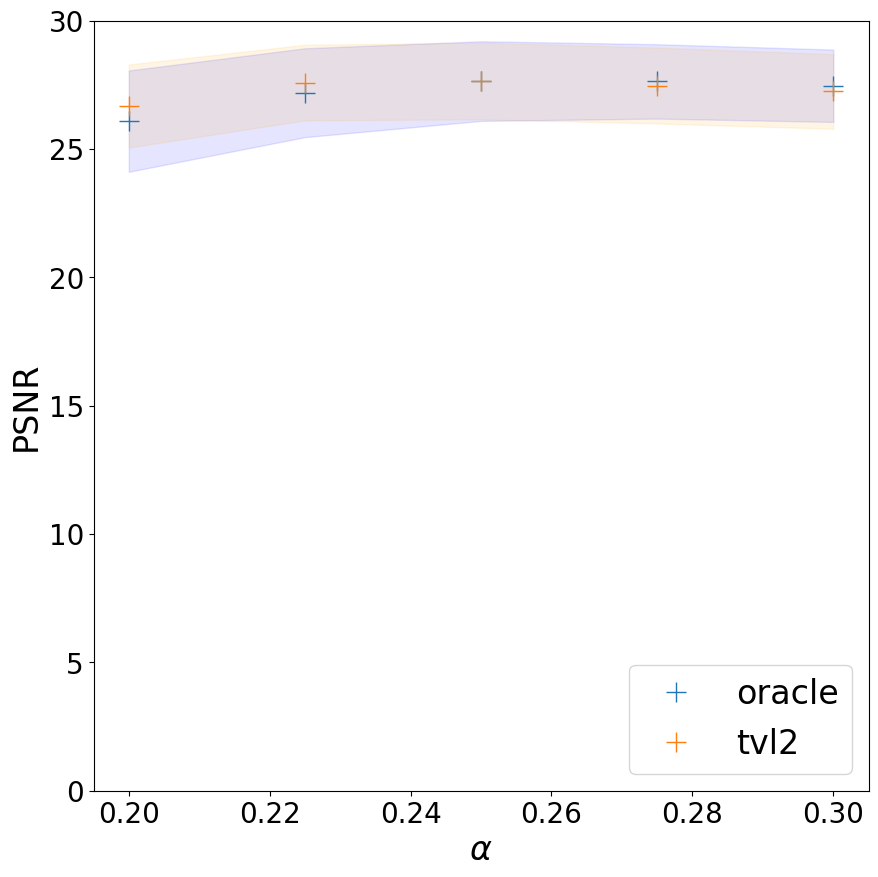} \\

\includegraphics[width=0.23\textwidth]{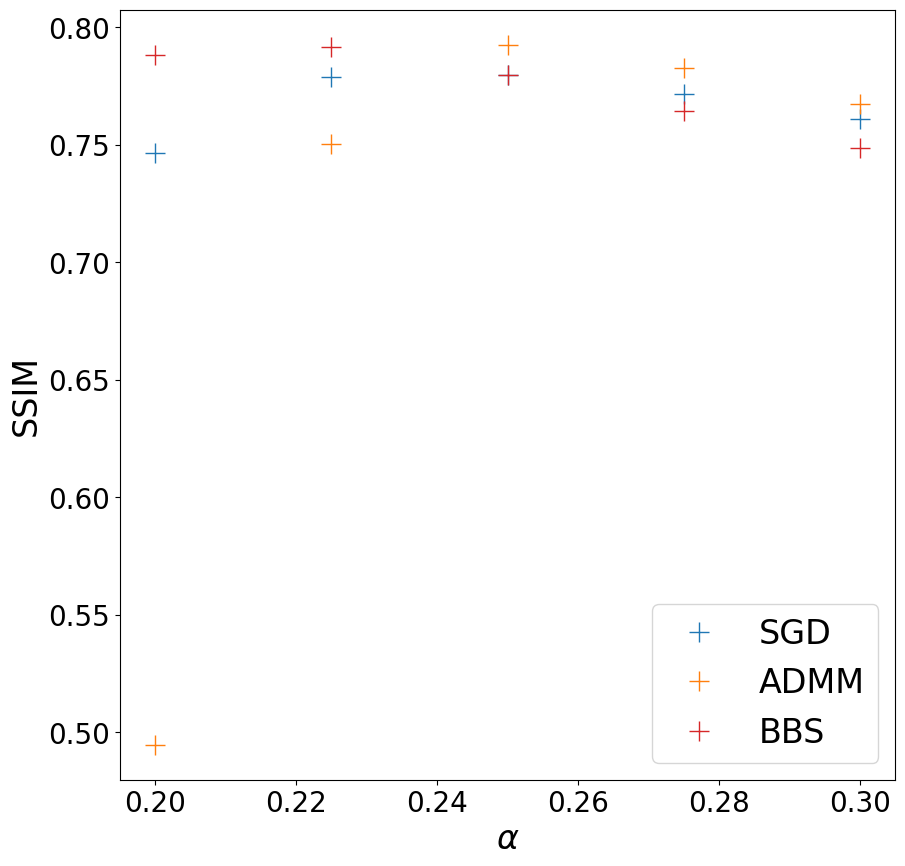} &
\includegraphics[width=0.215\textwidth,trim=40 0 0 0,clip]{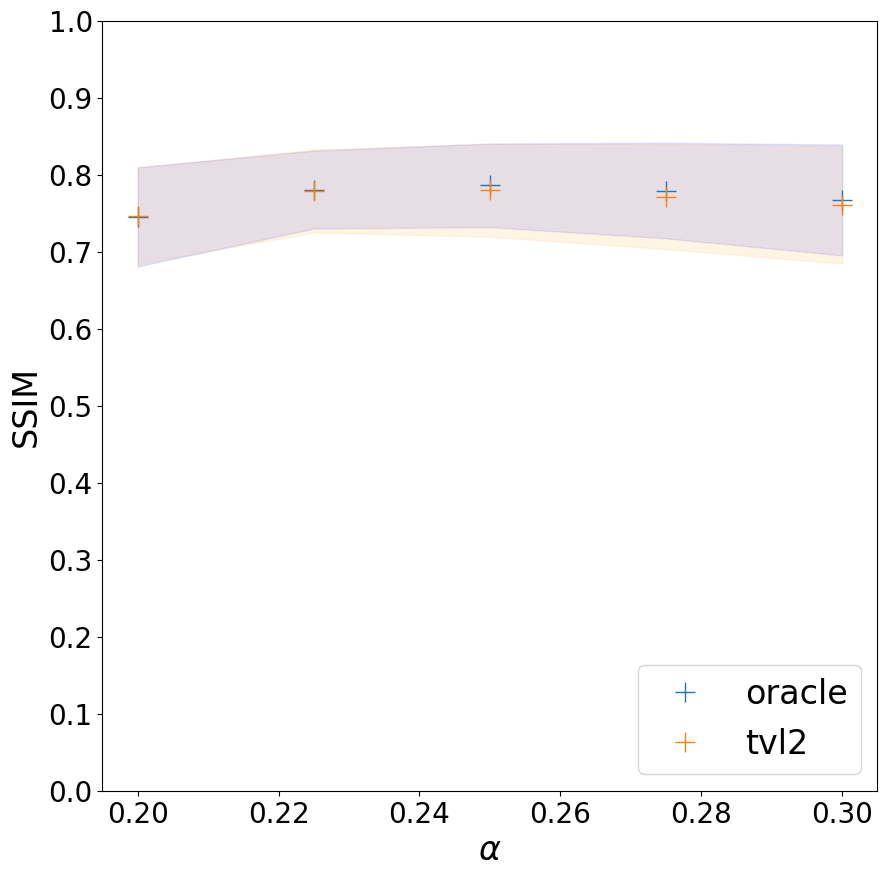} &
\includegraphics[width=0.215\textwidth,trim=40 0 0 0,clip]{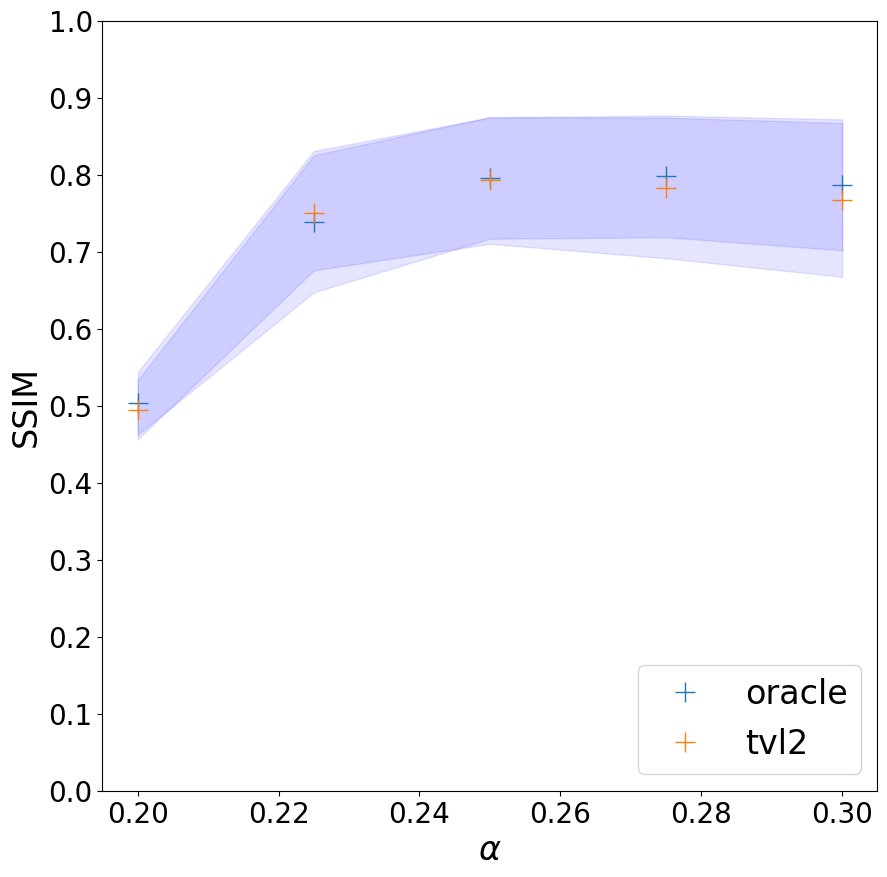} &
\includegraphics[width=0.215\textwidth,trim=40 0 0 0,clip]{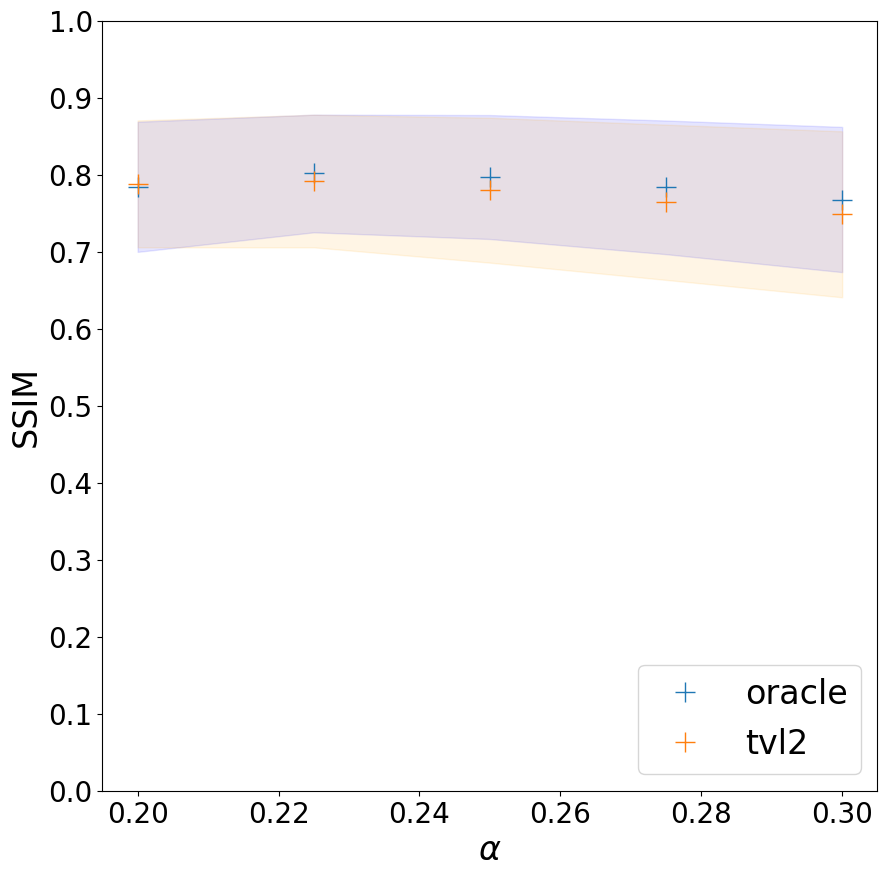} 

\end{tabular}
\caption{Plug \& Play denoising for $\sigma^2=(30/255)^2$ with the prior
  implicit in $D_\vareps$ for $\vareps=(5/255)^2$ and different values of the
  regularization parameter $\alpha$. This table shows means and standard
  deviations for $\mathrm{PSNR}$ and $\mathrm{SSIM}$ values over K=10
  independent noise realizations for each of the six images. Initialization
  plays a very minor role in this case and all algorithms achieve similar
  (nearly optimal) performance for $\alpha=0.25$.}
\label{fig:denoising-graphs}
\end{figure}

\begin{table}\centering
\begin{tabular}{ |l||c|c|c|c|  }
  \hline
  \multicolumn{5}{|c|}{Denoising  $\sigma^2=(30/255)^2$, $\vareps=(5/255)^2$, $\mathrm{TV}$-$\mathrm{L}_2$ init, $\alpha=0.25$} \\
  \hline
  & PnP-SGD  &PnP-ADMM & PnP-BBS & PnP-FBS\\
  \hline
  Overall $\mathrm{PSNR}$   & 27.65 & 27.37 & 27.65 & 27.56  \\
 \hline
 Simpsons      & 30.04 & 30.10 & 30.41 & 30.35 \\
 Traffic & 27.36 & 27.09 & 27.31 & 27.27 \\
Cameraman & 28.54 & 28.21 & 28.74 & 28.48  \\
Alley & 27.16 & 26.82 & 26.98 & 26.96 \\
Bridge & 26.28 & 25.83 & 26.18 & 26.03 \\
Goldhill & 26.55 & 26.18 & 26.30 & 26.30 \\
 \hline
\end{tabular}
\caption{Plug \& Play denoising for $\sigma^2=(30/255)^2$ with the prior in
  $D_\vareps$ for $\vareps=(5/255)^2$. This table shows mean $\mathrm{PSNR}$
  values over K=10 independent noise realizations for each of the six
  images. The regularization parameter $\alpha=0.25$ is nearly optimal for all
  algorithms.}
\label{tab:denoising-TVL2-optimal-alpha}
\end{table}

\begin{figure}
    \centering
\begin{tabular}{cc}
SGD ($\mathrm{PSNR}$=26.58 dB, $\mathrm{SSIM}$=0.69) &
ADMM ($\mathrm{PSNR}$=26.17 dB, $\mathrm{SSIM}$=0.68) \\
\includegraphics[width=0.3\textwidth]{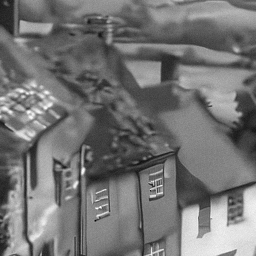}
&
\includegraphics[width=0.3\textwidth]{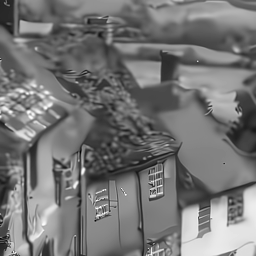}
\\
BBS ($\mathrm{PSNR}$=26.33 dB, $\mathrm{SSIM}$=0.64) &
FBS ($\mathrm{PSNR}$=26.31 dB, $\mathrm{SSIM}$=0.67) \\
\includegraphics[width=0.3\textwidth]{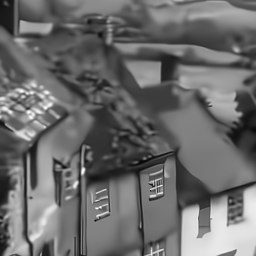}
&
\includegraphics[width=0.3\textwidth]{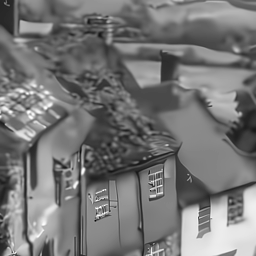}
\end{tabular}

\caption{Plug \& Play denoising for $\sigma^2=(30/255)^2$, $\vareps=(5/255)^2$
  with
  $\alpha=0.25$.} %
    \label{fig:denoising-goldhill-k3}
\end{figure}

\subsection{Deblurring}
\label{sec:deblurring}

We now turn to the deblurring problem.  In this section, images are blurred with
a uniform $9\times 9 $ kernel, and a small Gaussian noise of standard deviation
$\sigma = 1/255$ is added in order to define the degradation model. We now
compare the behavior of Algorithms~\ref{algo:pnp-sgd}-\ref{algo:pnp-bbs}.

Experiments with PnP-SGD follow the same rules as for the denoising problem and
the same observations are valid. When running PnP-ADMM we use approximately 200
iterations to ensure the convergence whereas for PnP-FBS and PnP-BBS, we use
approximately 500 iterations. Except for PnP-SGD (using
Proposition~\ref{prop:cv_pnpsgd}), these PnP algorithms are not guaranteed to
converge according to~\cite{ryu2019plug} since $\rmA$ is not invertible. In
practice PnP-FBS indeed converges only for very large values of the
regularization parameter $\alpha$, whereas other PnP algorithms converge for all
our experiments. As highlighted in Section~\ref{sec:convergence_conditions} this
suggests that convergence for PnP-ADMM and PnP-FBS occur under weaker conditions
than the ones prescribed in \cite{ryu2019plug}.

Figure~\ref{fig:deblurring-graphs} summarizes the results of deblurring on 10
independent random noise realizations on each of the 6 images in the dataset,
for PnP-SGD, PnP-ADMM and PnP-BBS (PnP-FBS is not shown here because it does not
converge most of the time), for $\mathrm{TV}$-$\mathrm{L}_2$ and oracle
initializations. Again, initialization appears to play a minor role in the final
results.

Observe that all algorithms show very similar performances (when they converge)
for these deblurring experiments. While PnP-SGD is slower to converge, it is
ensured to approximate the MAP theoretically.
Table~\ref{tab:deblurring-TVL2-optimal-alpha} summarizes the deblurring results
of all algorithms (including PnP-FBS) obtained for the nearly optimal setting of
$\alpha=0.3$.  In Figure~\ref{fig:denoising-goldhill-k3} we display the results
of the different algorithms for this denoising experiment. Interestingly, we
note that visual results for this deblurring problem are much more similar to
each other than for denoising experiments.

\begin{figure}
\centering
\hspace*{-1em}\begin{tabular}{c|ccc}
$\mathrm{TV}$-$\mathrm{L}_2$ init & \multicolumn{3}{c}{$\mathrm{TV}$-$\mathrm{L}_2$ vs. oracle init} \\

{SGD vs ADMM vs BBS} & SGD & ADMM & BBS \\

\includegraphics[width=0.23\textwidth]{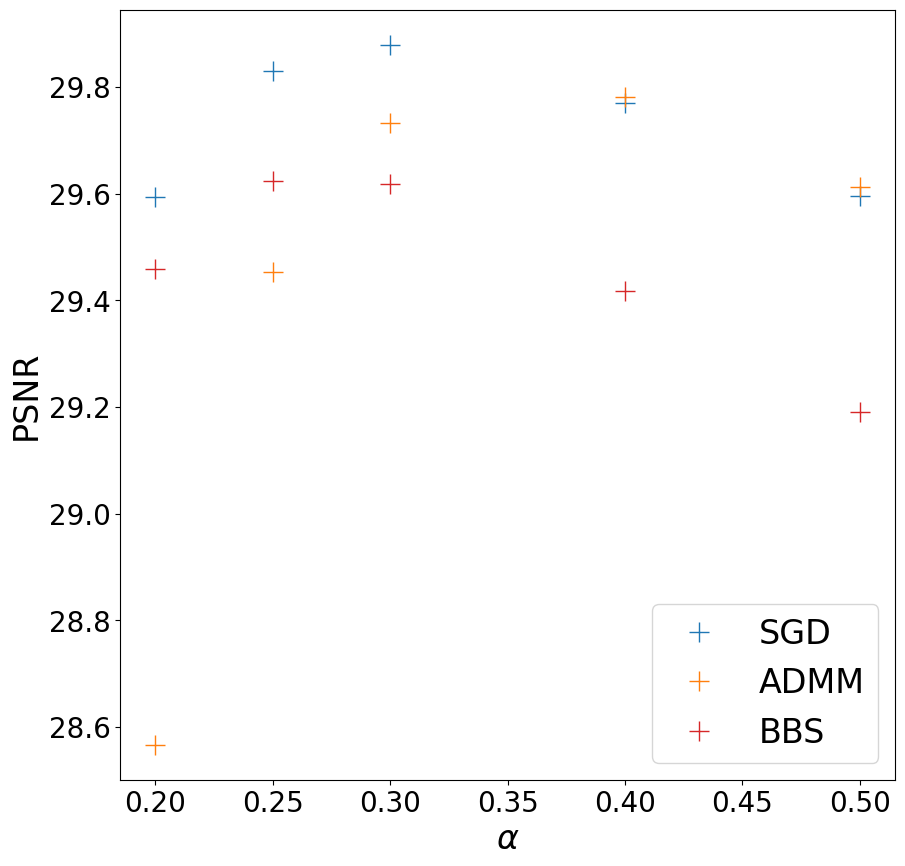} &

\includegraphics[width=0.215\textwidth,trim=40 0 0 0,clip]{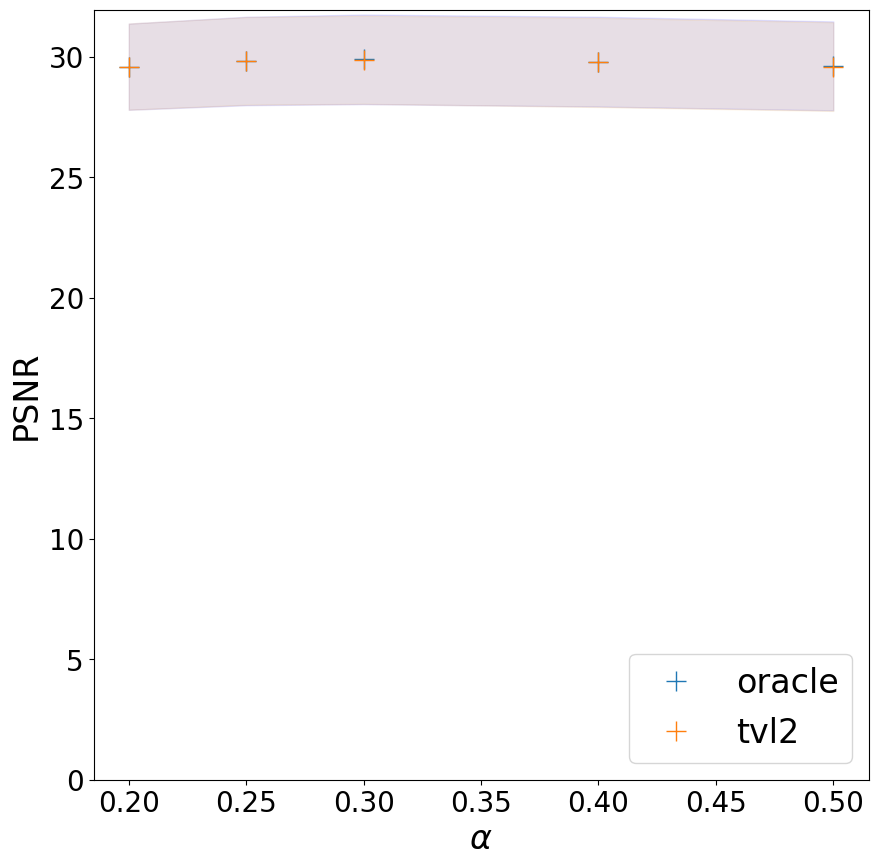} & 
\includegraphics[width=0.215\textwidth,trim=40 0 0 0,clip]{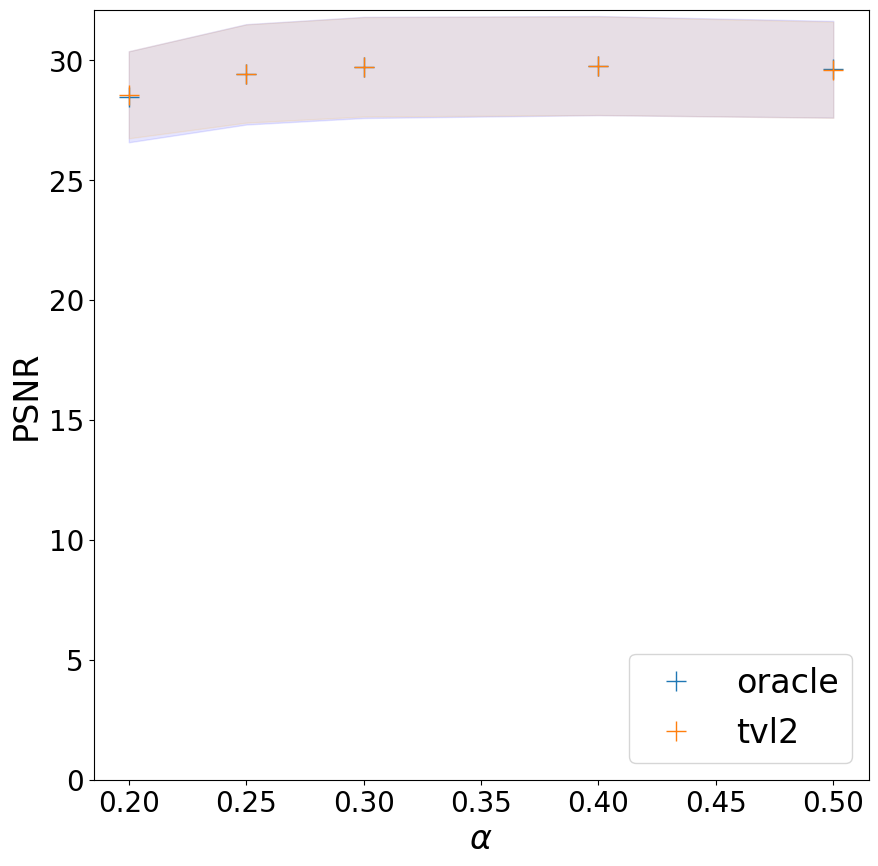} &
\includegraphics[width=0.215\textwidth,trim=40 0 0 0,clip]{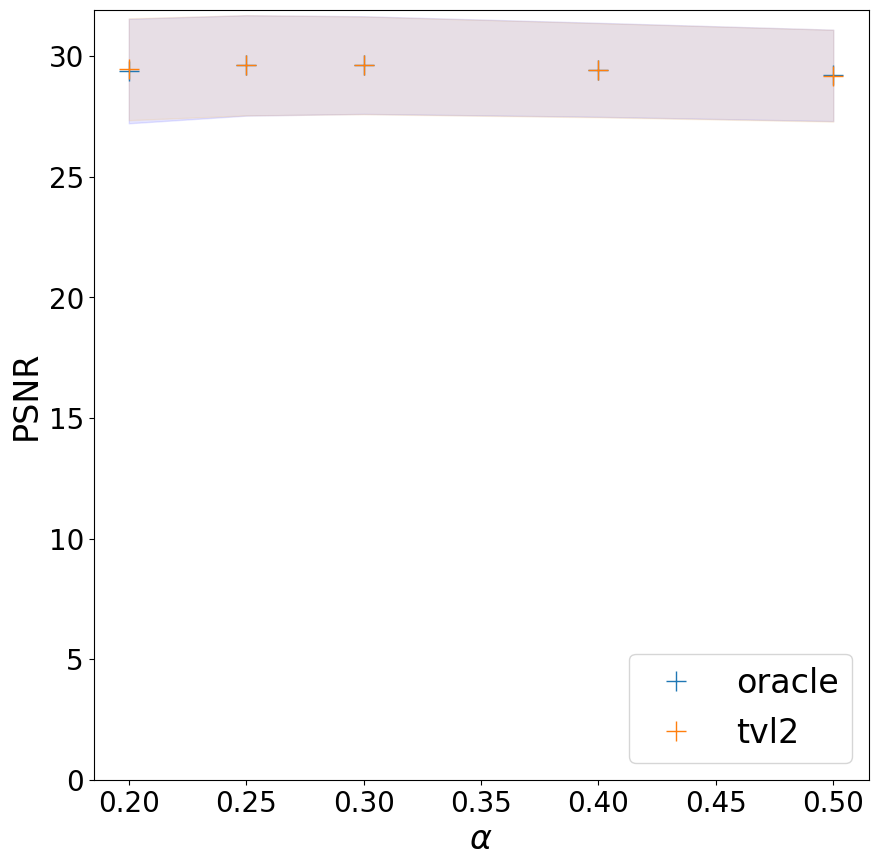} \\

\includegraphics[width=0.23\textwidth]{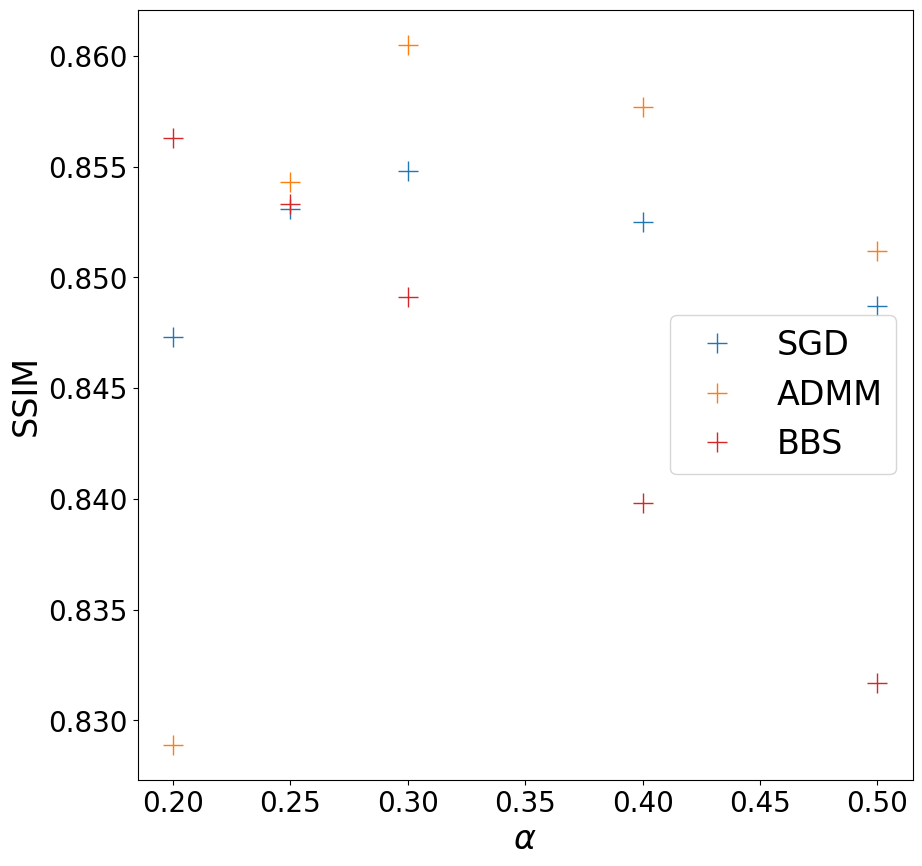} & 

\includegraphics[width=0.215\textwidth,trim=40 0 0 0,clip]{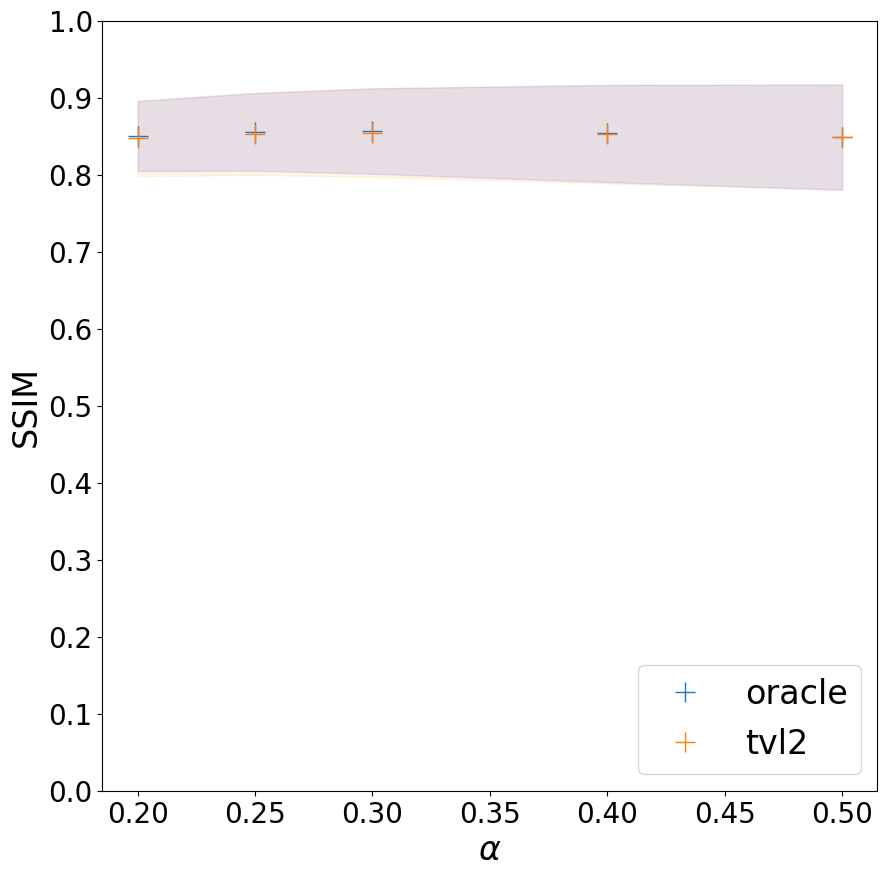} &
\includegraphics[width=0.215\textwidth,trim=40 0 0 0,clip]{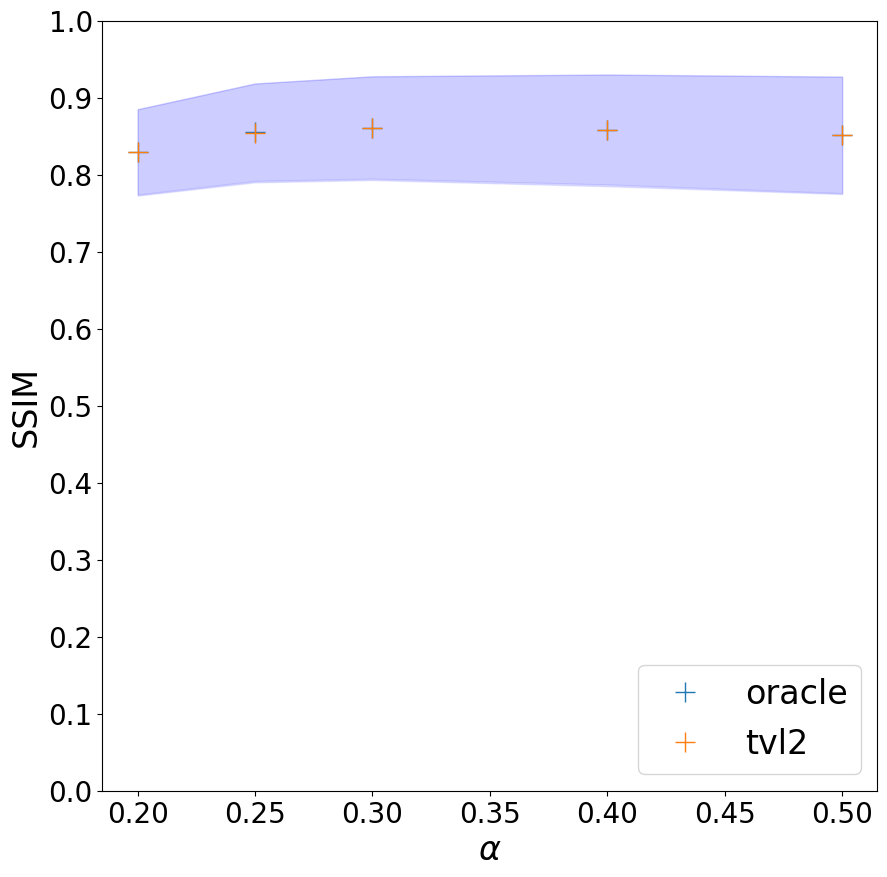} &
\includegraphics[width=0.215\textwidth,trim=40 0 0 0,clip]{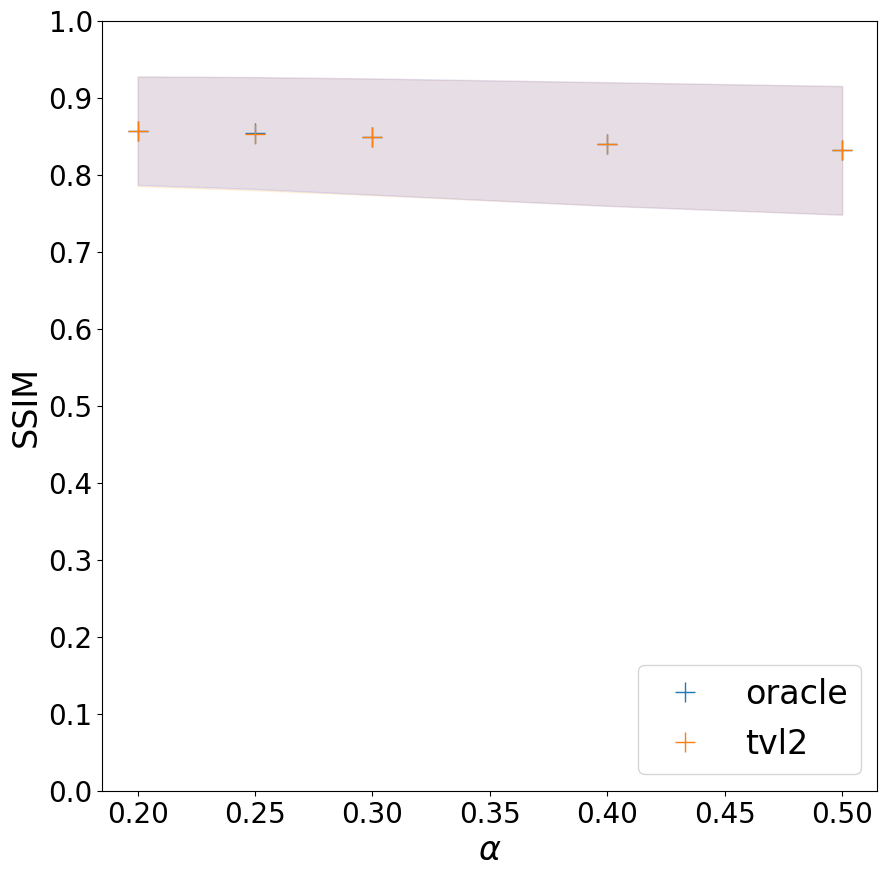}

\end{tabular}

\caption{Plug \& Play deblurring. Image are blurred with a $9\times 9$ uniform kernel, a Gaussian noise of standard deviation $\sigma^2=(1/255)^2$ is added. The denoiser $D_\vareps$ is trained at  $\vareps=(5/255)^2$. The plots shows mean and standard deviation values of $\mathrm{PSNR}$ and $\mathrm{SSIM}$ over K=10 independent noise realizations for each of the six images and different values of the regularization parameter $\alpha$. Initialization plays a very minor role in this case and all algorithms achieve similar (nearly optimal) performance for $\alpha=0.3$, except for FBS which requires a larger (sub-optimal) $\alpha$ to converge.
}
\label{fig:deblurring-graphs}
\end{figure}

\begin{table}\centering
\begin{tabular}{ |l||c|c|c|c|  }
  \hline
  \multicolumn{5}{|c|}{Deblurring a $9\times 9$ kernel with $\sigma^2=(30/255)^2$, $\vareps=(5/255)^2$, $\mathrm{TV}$-$\mathrm{L}_2$ init, $\alpha=0.3$} \\
  \hline
  & PnP-SGD  &PnP-ADMM & PnP-BBS & PnP-FBS\\
  \hline
  Overall $\mathrm{PSNR}$   & 29.88 & 29.73 & 29.62 & NaN \\
 \hline
 Simpsons      & 33.51 & 33.93 & 33.70 & NaN \\
 Traffic  & 29.41 & 29.27 & 29.10 & NaN \\
Cameraman & 30.68 & 30.43 & 30.39 & NaN \\
Alley & 29.26 & 28.99 & 28.90 & NaN \\
Bridge & 28.08 & 27.77 & 27.65 & NaN \\
Goldhill & 28.33 & 28.01 & 27.97 & NaN \\
 \hline
\end{tabular}
\caption{Plug \& Play deblurring. Image are blurred with a $9\times 9$ uniform kernel, a Gaussian noise of standard deviation $\sigma=1/255$ is added. The denoiser $D_\vareps$ is trained at  $\vareps^2=(5/255)^2$. This table shows mean $\mathrm{PSNR}$ values over K=10 independent noise realizations for each of the six images. The regularization parameter $\alpha=0.30$ is nearly optimal for all algorithms.}
\label{tab:deblurring-TVL2-optimal-alpha}
\end{table}

\begin{figure}
     \centering
\begin{tabular}{ccc}
{\tiny SGD ($\mathrm{PSNR}$=28.04 dB, $\mathrm{SSIM}$=0.84)} &
{ \tiny ADMM ($\mathrm{PSNR}$ = 27.77 dB, $\mathrm{SSIM}$=0.83)} &
{\tiny BBS ($\mathrm{PSNR}$=27.64 dB, $\mathrm{SSIM}$=0.82)}\\
\includegraphics[width=0.3\textwidth]{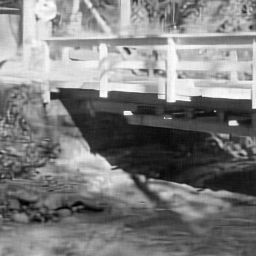}
&
\includegraphics[width=0.3\textwidth]{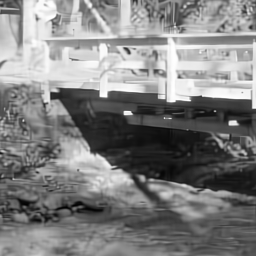}
&
\includegraphics[width=0.3\textwidth]{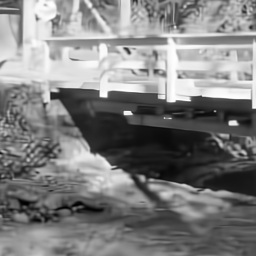}
\end{tabular}
   \caption{Example of Plug \& Play deblurring, for a $9\times 9$ kernel, an additive Gaussian noise of standard deviation $\sigma^2=(1/255)^2$, for $\vareps=(5/255)^2$ and for the nearly optimal value of $\alpha=0.3$.}
     \label{fig:deblurring-bridge-k1}
\end{figure}

\subsection{Inpainting}
\label{sec:inpainting}

The inpainting problem consists in trying to recover $x\in\R^d$ from a small
proportion of its pixels, namely from the measurements vector $y = \rmQ x$,
where $\rmQ$ is a $m \times d$ matrix consisting of $m$ random lines from the
$d \times d$ identity matrix, and $m=qd \ll d$. In our experiments we set
$q=20\%$. In this case, since measurements are not affected by noise, the
data-fitting term takes the form of a hard constraint, \ie \ for any
$x \in \rset^d$ and $y \in \rset^m$ we have
$$F(x,y) = \iota_{\msc_y}(x), \;\text{ where } \msc_y = \left\lbrace x\,:\, y = \rmQ x \right\rbrace \eqsp .$$
The non-differentiability of $F$ is not problem when using ADMM and BBS since in
this case the proximal operator of $\gamma F(\cdot,y)$ is not only defined but
admits a closed-form (which is independent of $\gamma=\vareps/\alpha$). More
precisely, we have for any $x \in \rset^d$ and $y \in \rset^m$,
$\operatorname{prox}_{\gamma\iota_C}(x) = \rmP^\star \rmP x + \rmQ^*y$ in terms
of the $(d-m) \times d$ matrix $\rmP$ containing all the lines of the identity
matrix which are not contained in $\rmQ$.  However, SGD and FBS cannot be
directly applied to this problem because they require $F$ to be
differentiable. Nevertheless we can apply these algorithms to an equivalent
formulation in the reduced space $\R^{d-m}$ of unknown pixels, as shown in the
following subsection.

\subsubsection{Adapting SGD to the non-differentiable inpainting problem}

In what follows, we denote by $\tilde x := \rmP x \in \R^n$ the vector of
$n=d-m$ unknown pixels in $x$. Given the unknown pixels $\tilde x = \rmP x$ and
the measurements $y=\rmQ x$ we can reconstruct $x$ via the affine mapping
$f_y:\R^n \to \R^d$ defined for any $x \in \rset^d$ and $y \in \rset^m$ by
$f_y(\tilde x) = \rmP^* \tilde x + \rmQ^*y $.

The solution of the original problem
$x_{\textsc{map}} = \arg\min_x F(x,y) + U(x)$ can then be written as
\begin{equation}
  \textstyle{
  x_{\textsc{map}} = \arg\min_{x\in \msc_y} U(x) = f_y (\, \arg\min_{\tilde x} U(f_y(\tilde x))) \eqsp , \quad \tilde{x}_{\textsc{map}} = \arg\min_{\tilde x} U(f_y(\tilde x)) \eqsp , }
\end{equation}
and $\tilde{x}_{\textsc{map}}$ can be found by gradient descent on
$\tilde{U} = U\circ f_y$. Using the chain rule and Tweedie's formula, we have
that the gradient of $\tilde{U}$ is given for any $x \in \rset^d$ and
$y \in \rset^m$ by
\begin{equation}
  \nabla \tilde U (\tilde x) = \rmP \nabla U (f_y (\tilde x)) = (1/\vareps) \rmP  (\Id - D_\vareps) \circ f_y(\tilde x) \eqsp . 
\end{equation}
Finally, since the affine operators $\rmP$ and $f_y$ are 1-Lipschitz we have
that $ \tilde{\Ltt} \leq (1/\vareps)$, where $\tilde{\Ltt}$ is the Lipschitz
constant of $\nabla \tilde{U}$.

\subsubsection{Parameter settings and results}

The inpainting problem we consider is extremely ill-posed since 80\% of the
pixels are only constrained by the image prior. Since our implicit prior
$p_\vareps(x)$ is most likely far from log-concave, the posterior shows a
particularly large number of local optima. For this reason all methods are
extremely sensitive to the initial condition.  The initial conditions used in
the previous experiments may misguide both ADMM and SGD to a wrong local
optimum.

To deal with this more difficult case, we consider a different approach, combining:
  \begin{itemize}
\item A coarse to fine scheme where we start by solving the MAP problem for
  large values of $\vareps$, and then use the result of this coarse MAP as an
  initialization for the next smaller value of $\vareps$. In our experiments we
  used $\vareps=(40/255)^2, (15/255)^2, (5/255)^2$, both for
  ADMM and for SGD;
    \item For each value of $\vareps$, a burn-in phase of 2000 iterations with  $\delta_0 = 2.5\deltaStable$, followed by a phase of $1000$ decreasing steps, as defined in~\eqref{eq:deltak}. 
    \end{itemize}
Table~\ref{tab:inpainting-overall} summarizes the results of different algorithmic strategies to solve our inpainting problem, on our set of 6 images with $K=4$ random realizations for each image, and
Figure~\ref{fig:inpainting-simpsons} shows an example of results on the \textit{Simpsons} image.

We can observe in Table~\ref{tab:inpainting-overall} that the coarse-to-fine
scheme is beneficial to both SGD and ADMM, allowing to reach a reconstruction
quality which comes very close to the oracle initialization.
This benefit is also clear on the visual results shown on Figure~\ref{fig:inpainting-simpsons}. In the case of a random initialization, the coarse to fine strategy is needed to avoid the apparition of spurious geometric structure in the background. In the case of the $\mathrm{TV-L_2}$ initialization, it yields better continuity in the fine black lines of the image. This holds both for ADMM and SGD.

In these inpainting experiments, we also observed that using larger initial
step-sizes at the beginning and using the stochastic gradient descent instead of
a simple gradient descent are important to obtain good MAP estimates. This could
be explained by the highly non-convex nature of this problem: the stochastic
term and the larger step sizes are required to avoid getting trapped in spurious
local optima.

\begin{table}
\centering
\begin{tabular}{|l|c|c|c|c|}
\hline
& \multicolumn{2}{|c|}{$\mathrm{PSNR}$} & \multicolumn{2}{|c|}{$\mathrm{SSIM}$} \\
\cline{2-5}
Method & mean & std dev & mean & std dev \\
\hline
\multicolumn{5}{|c|}{Random initialization} \\
\hline
SGD $\vareps=(5/255)^2$           &  23.43 &  2.75 & 0.7715 & 0.0517 \\
SGD $\vareps=(40/255)^2,\,(15/255)^2,\,(5/255)^2$ &\bf 26.32 &  1.76 & 0.8074 & 0.0702 \\
ADMM $\vareps=(5/255)^2$          &  19.34 &  3.09 & 0.6787 & 0.0629 \\
ADMM $\vareps=(40/255)^2,\,(15/255)^2,\,(5/255)^2$&  25.94 &  2.19 & \bf 0.8292 & 0.0745 \\
\hline
\multicolumn{5}{|c|}{$\mathrm{TV}$-$\mathrm{L}_2$ initialization} \\
\hline
SGD $\vareps=(5/255)^2$           &  26.01 & 1.53 & 0.8042 & 0.0684 \\
SGD $\vareps=(40/255)^2,\,(15/255)^2,\,(5/255)^2$ &  \bf 26.34 & 1.80 & 0.8074 & 0.0699 \\
ADMM $\vareps=(5/255)^2$          &  25.38 & 1.74 & 0.8216 & 0.0754 \\
ADMM $\vareps=(40/255)^2,\,(15/255)^2,\,(5/255)^2$&  25.87 & 2.13 & \bf 0.8266 & 0.0764 \\
\hline
\multicolumn{5}{|c|}{Oracle initialization} \\
\hline
SGD $\vareps=(5/255)^2$           & \bf 26.67 & 1.66 & 0.8116 & 0.0700 \\
SGD $\vareps=(40/255)^2,\,(15/255)^2,\,(5/255)^2$ & 26.36 & 1.76 & 0.8079 & 0.0702 \\
ADMM $\vareps=(5/255)^2$          & 26.16 & 2.18 & \bf 0.8330 & 0.0742 \\
ADMM $\vareps=(40/255)^2,\,(15/255)^2,\,(5/255)^2$& 25.93 & 2.14 & 0.8269 & 0.0768 \\
\hline
\end{tabular}
\caption{Inpainting with $p=0.8$, $\sigma=0$ with random, $\mathrm{TV}$-$\mathrm{L}_2$ and oracle initialization. Mean and standard deviation of $\mathrm{PSNR}$ and $\mathrm{SSIM}$ measures computed on K=4 random tests for each of the 6 images. Note the effectiveness of the coarse-to-fine scheme with either random or $\mathrm{TV}$-$\mathrm{L}_2$ initialization: Coarse to fine SGD is only 0.33 dB away from the solution obtained with oracle init, which should be quite close to the global optimum. ADMM is only 0.22 dB away from the solution obtained with oracle init.}
\label{tab:inpainting-overall}
\end{table}

\begin{figure}
\centering
\begin{tabular}{cccc}
 & random init & $\mathrm{TV}$-$\mathrm{L}_2$ init & oracle init \\
 \hline
 & 24.23 / 0.87 & 28.82 / 0.91 & 30.32 / 0.92 \\
\CenteredVcell{0.3\textwidth}{SGD $\sqrt{\vareps}=5/255$} & 
\includegraphics[width=0.3\textwidth]{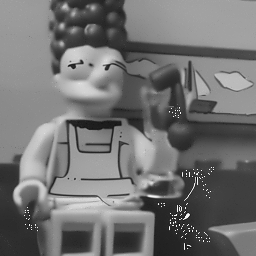} & 
\includegraphics[width=0.3\textwidth]{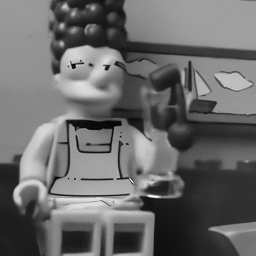} & 
\includegraphics[width=0.3\textwidth]{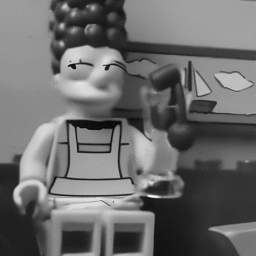}
\\
\hline
& 19.63 / 0.78 & 28.74 / 0.93 & 30.86 / 0.94\\
\CenteredVcell{0.3\textwidth}{ADMM $\sqrt{\vareps}=5/255$} &
\includegraphics[width=0.3\textwidth]{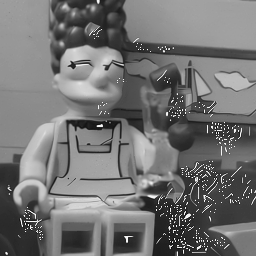} &
\includegraphics[width=0.3\textwidth]{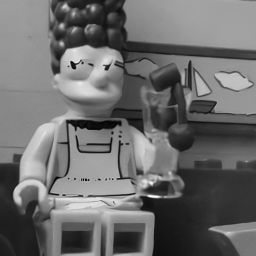} & 
\includegraphics[width=0.3\textwidth]{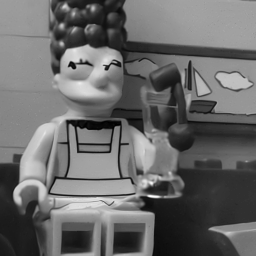}
\\
\hline
& 29.95 / 0.91 & 29.95 / 0.91 &  29.95 / 0.91 \\
\CenteredVcell{0.3\textwidth}{SGD $\sqrt{\vareps}=40/255,\,15/255,\,5/255$} &
\includegraphics[width=0.3\textwidth]{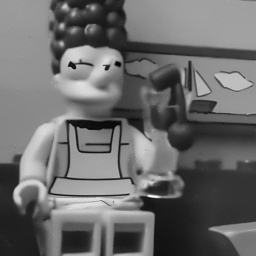} &
\includegraphics[width=0.3\textwidth]{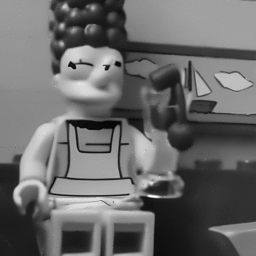} & 
\includegraphics[width=0.3\textwidth]{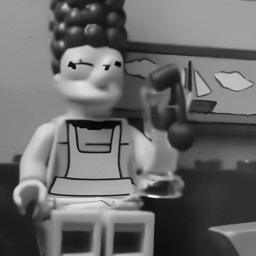}
\\
\hline
& 30.34 / 0.94 & 30.34 / 0.94 & 30.33 / 0.94\\
\CenteredVcell{0.3\textwidth}{ADMM $\sqrt{\vareps}=40/255,\,15/255,\,5/255$} &
\includegraphics[width=0.3\textwidth]{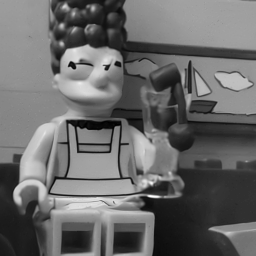} &
\includegraphics[width=0.3\textwidth]{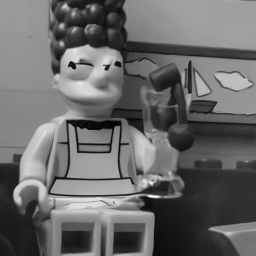} &
\includegraphics[width=0.3\textwidth]{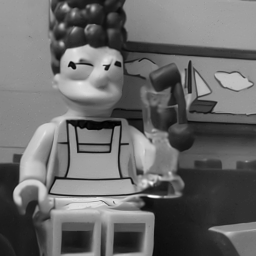}
\end{tabular}
\caption{Inpainting results for the Simpson's image with $p=0.8$, $\sigma=0$ each column corresponds to a different initial condition
}
\label{fig:inpainting-simpsons}

\end{figure}

\begin{acknowledgements}
VDB was partially supported by EPSRC grant EP/R034710/1. RL was partially supported by grants from Région Ile-De-France. AD acknowledges support  of the Lagrange Mathematical and Computing Research Center. MP was partially supported by EPSRC grant EP/T007346/1. JD and AA acknowledge support from the French Research Agency through the PostProdLEAP project (ANR-19-CE23-0027-01).
Computer experiments for this work ran on a Titan Xp GPU donated by NVIDIA, as well as on HPC resources from GENCI-IDRIS (Grant 2020-AD011011641).\end{acknowledgements}

\bibliographystyle{spmpsci}      %
\bibliography{pnp}   %

\end{document}